\crefname{theo}{theorem}{Theorem}
\newcommand{\sset}[1]{\left\{ #1 \right\}}
\newcommand{\virg}[1]{``#1''}
\newcommand{\ones}{\mathbbm{1}}
\newcommand{\Reals}{\mathbb{R}}
\newcommand{\Pro}{\mathbb{P}}
\newtheorem{proposition}{Proposition}
\newtheorem{theorem}{Theorem}
\newtheorem{lemma}{Lemma}
\newtheorem{definition}{Definition}
\title{Bound by semanticity: universal laws governing the generalization-identification tradeoff}
\author{%
\textbf{Marco Nurisso}\normalfont{\textsuperscript{1,2}},
\textbf{Jesseba Fernando}\normalfont{\textsuperscript{3,4}},
\textbf{Raj Deshpande}\normalfont{\textsuperscript{5}},
\textbf{Alan Perotti}\normalfont{\textsuperscript{2}},\\
\textbf{Raja Marjieh}\textsuperscript{6},
\textbf{Steven M. Frankland}\textsuperscript{7},
\textbf{Richard L. Lewis}\textsuperscript{8},
\textbf{Taylor W. Webb}\textsuperscript{9},\\
\textbf{Declan Campbell}\textsuperscript{10},
\textbf{Francesco Vaccarino}\textsuperscript{1},
\textbf{Jonathan D. Cohen}\textsuperscript{6,10},
\textbf{Giovanni Petri}\textsuperscript{2,5,11}\\
\textsuperscript{1}Dipartimento di Scienze Matematiche, Politecnico di Torino \\
\textsuperscript{2}CENTAI Institute\\
\textsuperscript{3}Network Science Institute, Northeastern University\\
\textsuperscript{4}Institute for Experiential AI, Northeastern University\\
\textsuperscript{5}NP Lab, Network Science Institute, Northeastern University London\\
\textsuperscript{6}Department of Psychology, Princeton University\\
\textsuperscript{7}Program in Cognitive Science, Dartmouth College\\
\textsuperscript{8}Department of Psychology, University of Michigan\\
\textsuperscript{9}Microsoft Research\\
\textsuperscript{10}Princeton Neuroscience Institute\\
\textsuperscript{11} Department of Physics, Northeastern University
}
\begin{document}

\maketitle
\begin{bibunit}[unsrtnat]
\begin{abstract}
Intelligent systems must deploy internal representations that are simultaneously structured—to support broad generalization—and selective—to preserve input identity. 
We expose a fundamental limit on this tradeoff. 
For any model whose representational similarity between inputs decays with finite semantic resolution $\epsilon$, we derive closed‑form expressions that pin its probability of correct generalization $p_S$ and identification $p_I$ to a universal Pareto front independent of input space geometry.  
Extending the analysis to noisy, heterogeneous spaces and to $n>2$ inputs predicts a sharp $1/n$ collapse of multi-input processing capacity and a non‑monotonic optimum for $p_S$.  
A minimal ReLU network trained end‑to‑end reproduces these laws: during learning a resolution boundary self‑organizes and empirical $(p_S,p_I)$ trajectories closely follow theoretical curves for linearly decaying similarity. 
Finally, we demonstrate that the same limits persist in two markedly more complex settings—a convolutional neural network and state‑of‑the‑art vision–language models—confirming that finite‑resolution similarity is a fundamental emergent informational constraint, not merely a toy‑model artifact.  
Together, these results provide an exact theory of the generalization‑identification trade‑off and clarify how semantic resolution shapes the representational capacity of deep networks and brains alike.
\end{abstract}

\section{Introduction}
\paragraph*{Background.}
Modern neural networks have revolutionized the way we perform everyday tasks. 
However, they exhibit inherent tradeoffs between information processing and generalization observed in cognitive systems \citep{campbell2024understanding}. 
These networks utilize distributed representations that enable efficient generalization in unseen situations \citep{Hinton,Hinton86, smolensky1990tensor}, but suffer from the binding problem ---the inability to maintain associations between features when processing multiple inputs simultaneously \citep{Roskies99,Greff20,treisman1980feature}. 

To understand how internal representations help to generalize, we build on Shepard's Universal Law of Generalization \citep{Shepard58, shepard1987toward}, which frames generalization as minimizing distances in psychological space--a principle supported by studies using similarity judgments in perceptual spaces \citep{Xie24, Schurgin20, Tomic24}.
The law states that generalization is the minimization of the distance between a new stimulus and internal representations in a psychological space, which has been further supported in a wide range of literature \citep{zaslavsky2018efficient,cheng2000honeybees,hebart2020multidim}. This principle was formalized through information bottleneck theory \citep{Tenenbaum2001Generalization, Sims18, tishby2015deep}, where neural systems optimize the tradeoff between compression and task performance \citep{Sims16, Tishby99, Schwartz-Ziv17}.

These theoretical limitations manifest in modern neural architectures: \citet{Gong24} demonstrated bottlenecks in attention heads analogous to working memory limits, and \citet{campbell2024understanding} showed that vision-language models fail at multi-item processing due to representation interference. 
Most notably, \citep{frankland2021no} proposed that the reason these
limits emerge is the same reason for their success: because they build structured representations that are suited for generalization. 

\paragraph{Our contribution. }
We investigate the fundamental tradeoff between representational fidelity and distinctness under finite semantic resolution. 
More precisely, we provide:

\begin{enumerate}[leftmargin=10pt]
    \item A framework that quantifies the exact Pareto front between identification and similarity performances, demonstrating how finite resolution creates an inescapable tradeoff;
    \item Closed-form expressions for this tradeoff across multiple inputs, noise levels, and varying resolutions, revealing a sharp $1/n$ collapse in multi-item ($n$) processing capacity;
    \item Empirical validation showing how this resolution boundary self-organizes during neural network training, with empirical trajectories closely following our theoretical predictions;
    \item Confirmation that these limits persist across architectures from simple ReLU networks, to CNNs, to vision-language models, establishing finite resolution as a universal constraint rather than a model-specific artifact. 
\end{enumerate}

\section{Setup}\label{section:setup}

\paragraph{Stimulus space and similarity functions.}
Assume $A$ to be a model processing stimuli coming from a set $S$ the structure of which is encoded by a distance function $d_S$.
For example, $S$ can be the space of color hues or days of the week, naturally arranged in a circle, the set of positions of an item in physical space, or more complex topological spaces, such as a torus, or the Klein bottle of natural image patches \citep{carlsson2008local}.

The model processes the stimuli coming from $S$ and builds representations by mapping them into a latent (or psychological) space $M$ with a map $\Phi:S\to M$, which we assume to be a bijection: this induces naturally a distance $d$ on $M$ via $d(x,y):=d_S(\Phi^{-1}(x),\Phi^{-1}(x))$.
In $M$, the representations are processed and compared through a non-negative similarity function $g\colon M\times M \to \Reals_+$.
For example, if $M$ is a vector space, we can choose $g(x,y)= h(\Phi(x)^\top \Phi(y))$ with $h(x)\geq 0\, \forall x$.
If $h(x) =\exp(-x)$, this encompasses, but is more general than, the standard self-attention mechanism of a transformer \citep{vaswani2017attention} 
\footnote{Our similarity function includes common ML metrics: cosine similarity in embedding models, dot-product attention in transformers, and implicit similarity in contrastive learning (InfoNCE, triplet loss). 
While these mechanisms differ in implementation, they all measure semantic relatedness between representations and are subject to the resolution limits we identify in this work.}.

The specific form of $g$ is not uniquely specified by the distance $d$, allowing for different degrees of \virg{semanticity}--how the metrical structure $d$ is represented by $g$-- with significant impacts on model capabilities. 
Localized functions $g_x := g(x,\cdot)$ reduce interference between representations, permitting more reliable distinction between them and thus accurate simultaneous processing of multiple representations.
Conversely, more distributed $g$ can reflect long-range relations of $S$, thus enhancing generalization capabilities, at the cost of potential interference among distinct but nearby stimuli.
In the following, corroborated by seminal works in the cognitive psychology literature \citep{shepard1987toward}, we assume for simplicity that $g$ depends only on the distance between the stimuli: $g(x,y) = g(d(x,y))$. 
%

%
\begin{figure}
    \centering
    \includegraphics[width=0.92\linewidth]{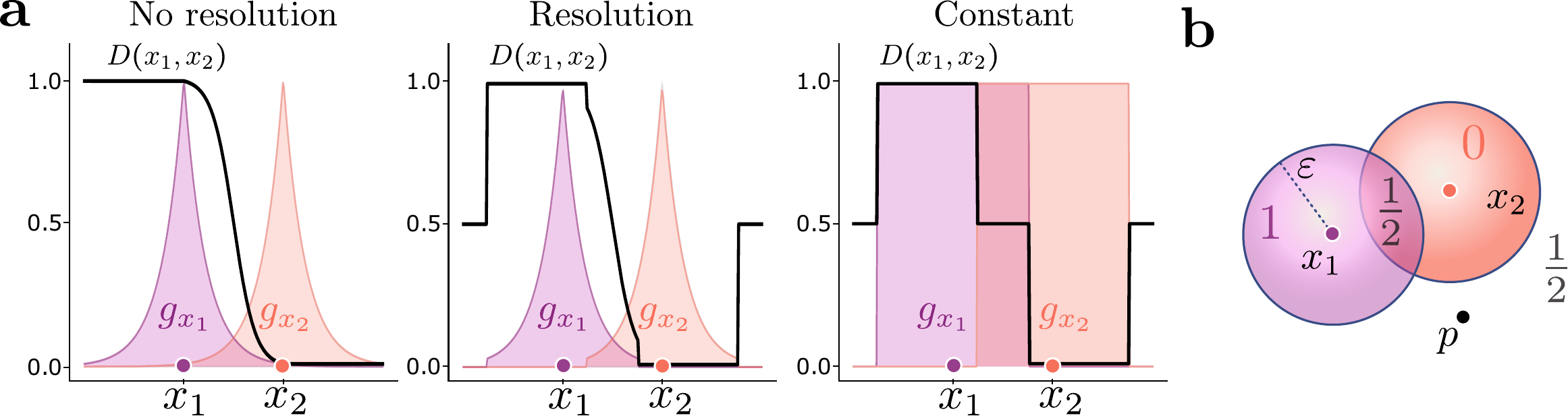}
    \caption{
    \textbf{a.} On the left, exponential similarity functions centered on two stimuli $x_1,x_2\in M$, with the black line indicating the decision function $g(x_1,p)/(g(x_1,p)+g(x_2,p))$ with no resolution (see \Cref{section:setup} for explanation). On the center and right, the same quantities are shown in the case of the presence of finite resolution. Notice that the model becomes uncertain for probes far away from stimuli $x_1,x_2$. 
    \textbf{b.} Visualization of the constant similarity functions of \Cref{def:constant_sim}.}
    \label{fig:similarity_functions}
\end{figure}
\paragraph{Measures of identification and generalization accuracy.}
Following \citet{frankland2021no}, we introduce models of two simple tasks that have previously been used to measure identification and generalization accuracy, and that we use in our theoretical analyses below.

We measure the generalization capabilities of $A$ using a \emph{similarity task} in which the model is asked to perform similarity judgments that respect the metric structure of the stimulus space.
The model is shown $n$ stimuli $x_1,\dots, x_n\in S$ and additional one, called the \emph{probe}, $p\in S$. 
It is then asked to decide which of the $n$ stimuli is the closest to $p$ according to the distance $d$.
Let $(x_1,\dots,x_n),p$ be sampled independently from $M$ according to a probability measure $\nu$.
We call $X$ the random variable encoding the index of the closest item to the probe, i.e. $X = \underset{i=1,\dots,n}{\mathrm{argmin}}\, d(x_i,p)$.
Intuitively, the decision function represents how the model assesses the evidence when determining which input is most similar to the probe.
It formalizes the idea that the model's choice depends on relative similarity strengths rather than absolute values.
We call $Y$ the random variable indicating the model's decision, that we model as follows \citep{luce1959individual}:
\begin{equation}\label{eq:decision_sim}
    D_i(x_1,\dots,x_n;p) 
 :=  \mathbb{P}(Y=i|(x_1,\dots,x_n,p)) = \frac{g(x_i,p)}{\sum_{k=1}^n g(x_k,p)}.
\end{equation}
We quantify the overall generalization capability as the probability of the model making the correct decision, i.e. $p_S := \Pro(Y=X)$.

The \emph{identification} task is used to measure how accurately stimuli can distinguished from one another.
The task is the same as the similarity task, but with the exception that the probe is always one of the input stimuli $p\in\sset{x_1,\dots,x_n}$.
This will result in the decision function of \Cref{eq:decision_sim} always being of the form
\begin{equation}\label{eq:decision_id}
    D_i(x_1,\dots,x_n;x_j):=\mathbb{P}(Y=i|(x_1,\dots,x_n,x_j)) =  \frac{g(x_i,x_j)}{\sum_{k= 1}^n g(x_k,x_j)}.
\end{equation}
If now $X(x_1,\dots,x_n;x_j) = j$, we write $p_I:=\Pro(Y=X)$ to indicate the probability of the model succeeding in the identification task.
\Cref{eq:decision_sim,eq:decision_id} can be interpreted, independent of probabilities, in terms of relative similarity, where $p_S$ is taken to represent the average \emph{relative similarity} of stimuli that are close compared to stimuli that are further apart. 
In the same way, $p_I$ is the average relative similarity of equal stimuli compared to different stimuli.

Importantly, when $g(x_i,x_j) = \exp(-\mu d(x_i,x_j))$, and the decay rate for the exponential is taken to infinity ($\mu\to\infty$), both $p_S$ and $p_I$ approach 1, (perfect performance); that is, identification and generalization accuracy both benefit by maximizing decay rate .
Critically, however, it has been observed empirically that virtually \textit{any} loss of precision (i.e., resolution) in computing the similarity function introduces a fundamental tension between $p_S$ (generalization) and 
$p_I$ (identification accuracy) with respect to decay rate, wherein generalization benefits by \textit{decreases} in decay rate that dramatically degrade identification accuracy (\Cref{fig:similarity_functions}a).  
This tension has been referred to as "Miller's Law" \citep{frankland2021no}.  Here, we provide a formal analysis of this effect, showing that it generalizes to learning in neural networks, where it imposes a fundamental constraint on the interaction between representations and efficiency of processing.

\paragraph{The effect of resolution.}
\footnote{Note on terminology: ``resolution" ($\varepsilon$) in this paper strictly refers to the parameter controlling the distance threshold beyond which similarities collapse to noise level $\Delta$. Higher $\varepsilon$ values mean the model preserves similarity information across greater distances.}
To show this, we formally consider how a  limit in precision with which the model can compute a similarity function 
impacts both identification and generalization accuracy. 
Such a limit might arise from any number of factors: computational noise, finite precision, ReLU activations clamping negative correlations to zero (see \Cref{section:nn}), or imprecisely coded distant relationships.
These can all be formalized as a \emph{resolution} $\varepsilon>0$ such that  $g(x,y) \approx \Delta$ if $d(x,y)>\varepsilon$, where $\Delta$ is a noise parameter.
As shown in \Cref{fig:similarity_functions}a, the resolution drastically affects decision boundaries (the \textbf{black} line): for probes sufficiently far from both stimuli, the decision function approaches $1/2$ indicating maximal uncertain.
Resolution thus represents the model's inherent limitation in gauging low similarities between distant stimuli.

\paragraph{Generalization-Identification Tradeoff (Miller's Law).} 
To analyze this, we use a simplified similarity function.  If $\ones_A$ is the indicator function over the set $A$, and $B_r(x)$ is the closed ball of center $x$ and radius $r$ over $M$, $B_r(x)=\sset{y\in M:d(x,y)\leq r}$, the similarity function can be defined as follows:
\begin{definition}\label{def:constant_sim}
The constant similarity function with resolution $\varepsilon$ and noise $\Delta$ is $g_{\varepsilon;\Delta}(x,y)=\ones_{B_\varepsilon(x)}(y) + \Delta \ones_{M\setminus B_\varepsilon(x)}(y)$.
\end{definition}

According to this function, the model will judge two things to be similar ($g_{\varepsilon;\Delta}(x,y)=1$) if and only if they are closer than a certain threshold $\varepsilon>0$.
Outside of this \virg{resolution region} the similarity value is fixed to a noise value $\Delta>0$.

This simplified model aligns with Shepard's Universal Law of Generalization \citep{shepard1987toward}, where similarity decays exponentially with distance: $g(x,y)=\exp(-\mu d(x,y))$. 
In Shepard's formulations, the parameter $\mu$ controls the sensitivity to distance, with larger $\mu$ creating sharper similarity boundaries. 
This is conceptually similar to controlling the temperature parameter in a softmax function, in which lower temperatures induce sharper probability distributions, while higher temperatures make them more uniform. 
In our framework, $\varepsilon$ serves an analogous role, controlling the distance of the similarity functions or the spatial range of entanglement (or \emph{semanticity}) of the representations.

Below, we use this to quantify the generalization-identification tradeoff as a function of $\varepsilon$.
\section{Theoretical results}
We use the constant similarity function defined above to derive closed form solutions for the values of $p_S$ and $p_I$ over a broad class of stimulus spaces and probability distributions over them.  

Accordingly, we denote $b_p(\varepsilon)$ as the probability measure of the closed ball of radius $\varepsilon$ centered in $p$, 
$b_p(\varepsilon) := \nu(B_\varepsilon(p))$. 
Furthermore, let $\langle b(\varepsilon)\rangle = \mathbb{E}_{p\sim\nu}[b_p(\varepsilon)]$ be the average measure of a ball of radius $\varepsilon$ in $M$, and $\mathrm{Var}(b(\varepsilon))$ its variance.
The variance term $\mathrm{Var}(b(\varepsilon))$ captures how the probability mass of $\varepsilon$-balls varies across space.
Intuitively, this measures the heterogeneity of the stimulus space--that is, how differently `crowded' regions are, which, in turn, compromises similarity judgments.
Additional assumptions and notations are described in \Cref{section:techinical}.

\begin{theorem}[$2$-item tests]\label{theo:2-item}
Let $(M,d,\Sigma,\nu)$ be a separable metric probability space.  
If, for every $p\in M$, $b_p$ is absolutely continuous on every closed sub-interval of $[0,\infty)$, then, for the noise-free constant similarity function $g = g_{\varepsilon;0}$ it holds that
\begin{align}
p_S(\varepsilon) &= \frac{1}{2} + \langle b(\varepsilon) \rangle - \langle b(\varepsilon)\rangle^2 - \mathrm{Var}(b(\varepsilon)),\label{eq:sim_2}\\
p_I(\varepsilon) &= 1 - \frac{1}{2}\langle b(\varepsilon) \rangle.  \label{eq:ide_2}
\end{align}
\end{theorem}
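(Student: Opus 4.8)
The plan is to compute both quantities by conditioning and reducing each to a short case analysis driven by a single random event: whether a stimulus lies in the $\varepsilon$-ball of the probe. Since $g=g_{\varepsilon;0}$ takes only the values $0$ and $1$, with $g(x,p)=\mathbb{1}[d(x,p)\le\varepsilon]$, the decision function in \eqref{eq:decision_sim} is entirely determined by which of the two indicators $\mathbb{1}[x_1\in B_\varepsilon(p)]$, $\mathbb{1}[x_2\in B_\varepsilon(p)]$ are nonzero. I would first fix the convention that when both vanish (the $0/0$ case) the model guesses uniformly, matching the ``maximal uncertainty'' reading in the text, and then treat the two tasks in turn.

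I would start with $p_I$, which is the easier one. Conditioning on the probe being $p=x_1$ (by exchangeability the index is immaterial), one has $g(x_1,x_1)=1$ always while $g(x_2,x_1)=\mathbb{1}[d(x_1,x_2)\le\varepsilon]$. If $d(x_1,x_2)>\varepsilon$ the decision function concentrates on the correct index and the answer is right; if $d(x_1,x_2)\le\varepsilon$ both similarities equal $1$, the decision function is $\tfrac12$, and the answer is right with probability $\tfrac12$. Hence $p_I = 1-\tfrac12\,\mathbb{P}(d(x_1,x_2)\le\varepsilon)$, and a Fubini step using that $x_1,x_2$ are i.i.d.\ gives $\mathbb{P}(d(x_1,x_2)\le\varepsilon)=\mathbb{E}_{x_1}[\nu(B_\varepsilon(x_1))]=\langle b(\varepsilon)\rangle$, which is \eqref{eq:ide_2}.

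For $p_S$ I would condition instead on the probe $p$ and set $\beta := b_p(\varepsilon)=\nu(B_\varepsilon(p))$. Given $p$, the events $E_i:=\{x_i\in B_\varepsilon(p)\}$ are independent, each of probability $\beta$. The key observation is that if exactly one stimulus lies in $B_\varepsilon(p)$, that stimulus is necessarily the closer one (its distance is $\le\varepsilon$, the other's is $>\varepsilon$) and the decision function deterministically selects it, so the answer is correct; whereas if both or neither lie in the ball the decision degenerates to a coin flip, correct with probability $\tfrac12$. Weighting the four cases by their conditional probabilities $\beta^2$, $\beta(1-\beta)$, $(1-\beta)\beta$, $(1-\beta)^2$ collapses to $p_S(p)=\tfrac12+\beta-\beta^2$. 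Taking the expectation over $p\sim\nu$ and using $\mathbb{E}_p[\beta^2]=\mathrm{Var}(b(\varepsilon))+\langle b(\varepsilon)\rangle^2$ yields \eqref{eq:sim_2}.

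The routine measure-theoretic verifications are where the stated hypotheses enter, and this is the part I expect to demand care rather than ingenuity. Absolute continuity of $r\mapsto b_p(r)$ guarantees that the boundary sphere $\{d(\cdot,p)=\varepsilon\}$ and the tie set $\{d(x_1,p)=d(x_2,p)\}$ are $\nu$-null, so I may ignore boundary ambiguities both in the indicators and in the $\mathrm{argmin}$ defining $X$; separability ensures the relevant maps, in particular $p\mapsto b_p(\varepsilon)$, are measurable, justifying the conditioning and the Fubini exchange. I would record these points explicitly and keep the arithmetic of the case analysis terse.
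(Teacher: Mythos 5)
Your proof is correct, and for the identification test it coincides with the paper's argument essentially line for line. For the similarity test your case decomposition is the same one the paper uses in \Cref{appendix:proof_1} — writing $\beta=b_p(\varepsilon)$, the cases ``both inside'', ``exactly one inside'' and ``both outside'' $B_\varepsilon(p)$ contribute $\tfrac12\beta^2$, $2\beta(1-\beta)$ and $\tfrac12(1-\beta)^2$ respectively — but your execution is more elementary. The paper first conditions on the ordering event $X=1$, changes variables to the radial pushforward measure $S_p$, and evaluates nested integrals such as $\int(1-b_p(r))\,b_p'(r)\,d\mu(r)$ via the Lebesgue fundamental theorem of calculus; there the absolute-continuity hypothesis does double duty, both ruling out ties and licensing that integration. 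You instead condition directly on the two independent Bernoulli events $\{x_i\in B_\varepsilon(p)\}$ and observe that the conditional success probability depends only on the membership pattern ($\tfrac12$ in the symmetric cases, $1$ when exactly one item is inside, since that item is then necessarily the $\mathrm{argmin}$), so absolute continuity enters only through the tie-set argument of \Cref{lemma:only_one}, which is needed for the symmetric cases to contribute exactly $\tfrac12$ rather than more. Your route is shorter and avoids the radial calculus entirely for the constant similarity function; the paper's heavier radial-integral formulation pays off later, since it is reused essentially verbatim for the noisy case (\Cref{theo:noise}), the $n$-item case (\Cref{theo:n-item}), and the linearly decaying similarity of \Cref{prop:lineardecay}, where the decision function genuinely depends on the distances and a pure membership argument no longer suffices.
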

\vspace{-5mm}
\begin{proof}
The proofs 
can be found in \Cref{appendix:proof_1}.
\end{proof}
These results have implications for neural architecture design and quantify how much identification performance must be sacrificed to gain generalization ability.
These results, being independent of model choices, provide multiple insights on how $p_S,p_I$ depend on the resolution $\varepsilon$ and on their relation. 

First, note that the variance of the ball volume appears in \Cref{eq:sim_2} as a term responsible for decreasing the probability of success in the similarity test.
This happens when the probability distribution is non-uniform or the space is heterogeneous, as for a manifold with boundary). 
Spaces which are homogeneous (in Haar measure) with uniform probability distributions
 will have $\mathrm{Var}(b(\varepsilon))=0$, hence performing similarity tests on them will be easier.
Therefore, models will perform better on uniform data manifolds (such as rotations), than on manifolds with varying density (such as natural images).

The specific values of $p_I(\varepsilon)$ and $p_S(\varepsilon)$ can vary depending on the space chosen. 
However, assuming $ \mathrm{Var}(b(\varepsilon)) = 0$, they are both parametrized by $\langle b(\varepsilon) \rangle$, which is always a non-decreasing function of $\varepsilon$ from 0 to 1.
This means that, in the $(p_S,p_I)$ plane, there is a \virg{universal} Pareto curve relating identification to generalization accuracy that is independent of $M$ and $\nu$ (\Cref{fig:curves_theorem}a).
Indeed, as we will show in \Cref{section:nn}, the distance of empirical performances from the Pareto front directly quantifies the additional `difficulty' introduced by the heterogeneity of the stimuli space (\Cref{fig:curves_theorem}b). 

\begin{wrapfigure}[35]{r}{0.4\textwidth}
  \begin{center}
    \includegraphics[width=0.45\textwidth]{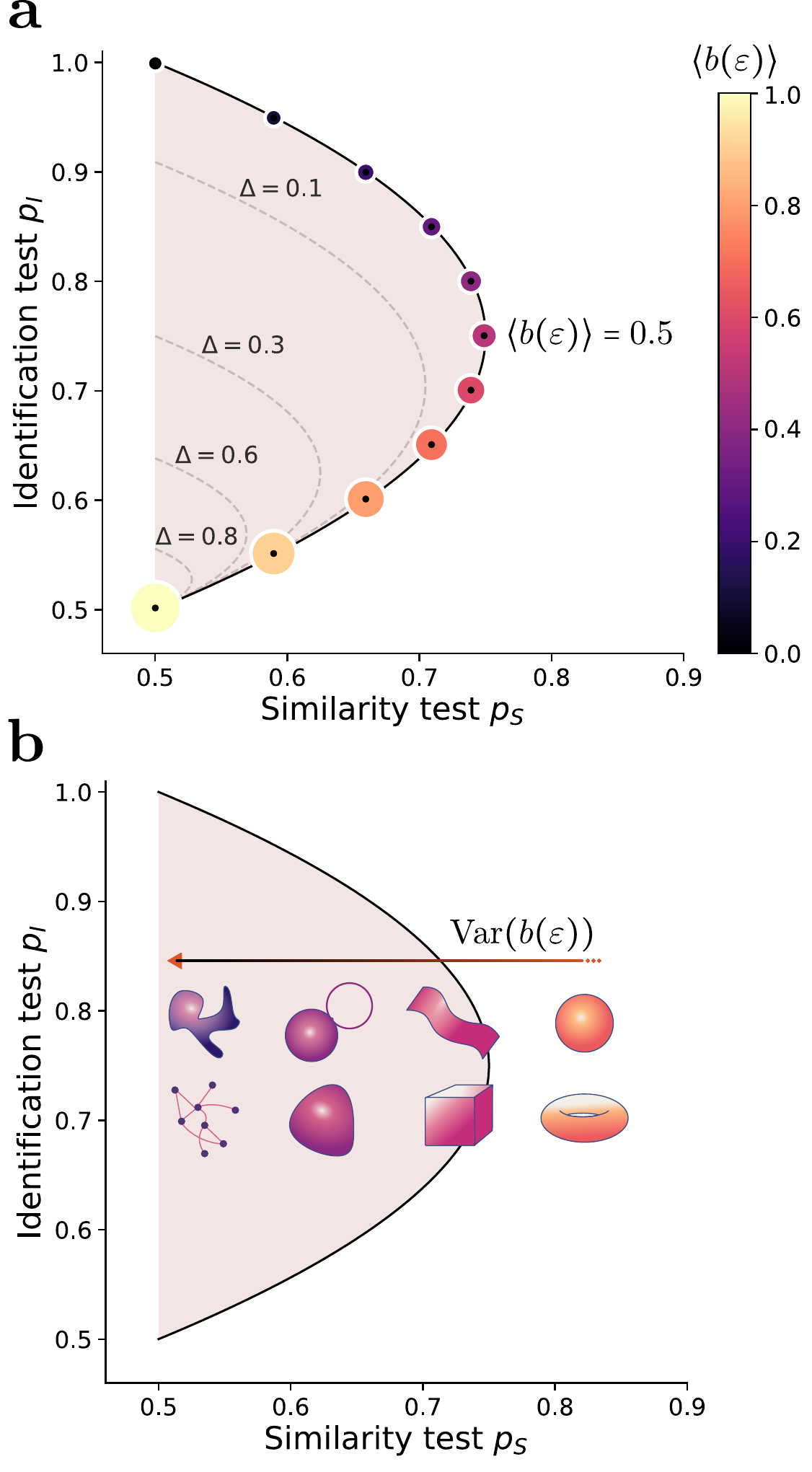}
  \end{center}
  \caption{\textbf{a.} The region in $(p_S,p_I)$ plane where the model's performances lie (\Cref{theo:2-item}). 
  The black line is parameterized by the resolution $\varepsilon$ and represents the behaviour of the model in homogeneous spaces. 
  \textbf{b.} Effect of heterogeneity $\mathrm{Var}(b(\varepsilon))$ on the similarity test performance.}
  \label{fig:curves_theorem}
\end{wrapfigure}
This curve exhibits three regimes as a function of the ball's resolution $\varepsilon$.

\textcolor[HTML]{3B0F70}{\textbf{Low} $\varepsilon$}\textbf{ regime.}
For small resolutions, the similarity functions act like Dirac deltas, meaning that representations do not interfere with one another and thus are perfectly distinguishable ($p_I \approx 1$).
However, small resolutions mean that the model is able to recognize two objects as similar only if they are very close, limiting generalization ($p_S \approx 0.5$, chance level).

\textcolor[HTML]{B73779}{\textbf{Medium} $\varepsilon$}\textbf{ regime.}
Increasing $\varepsilon$ elicits the similarity-identification tradeoff:  
As $\varepsilon$ increases, the similarity measure for more distant stimuli becomes more robust, and thus the structure of the space can be more accurately represented. 
However, this comes at the cost of nearby stimuli becoming more similar, thereby producing interference that decreases $p_I$.
Importantly, $p_S$ reaches a maximum at $\langle b(\varepsilon)\rangle = \frac{1}{2}$, i.e. when the average ball covers half of the space.

\textcolor[HTML]{F76F5C}{\textbf{High} $\varepsilon$ }\textbf{ regime.}
Once $\epsilon$ increases beyond $\langle b(\varepsilon)\rangle > \frac{1}{2}$, the cases in which  stimuli interfere $d(x_1,p)\leq\varepsilon, d(x_2,p)\leq \varepsilon$ outweigh the ones in which the probe is too far away $d(x_1,p) > \varepsilon, d(x_2,p) > \varepsilon$, resulting in a decrease in both $p_S$ and $p_I$.

\vspace{-10pt}

\paragraph{The effect of noise.} The result of \Cref{theo:2-item} can be readily extended to take into account the presence of nonzero noise outside the resolution region.
\begin{theorem}[Noise]\label{theo:noise}
Under the same assumptions of \Cref{theo:2-item}, for the two-item similarity and identification tests with constant similarity functions $g=g_{\varepsilon;\Delta} $ with noise level $\Delta\geq 0$ it holds that
\begin{align}
p_S(\varepsilon,\Delta) &= \frac{1}{2} + \frac{1-\Delta}{1+\Delta}(\langle b(\varepsilon)\rangle - \langle b(\varepsilon)^2 \rangle),\\
p_I(\varepsilon,\Delta) &= \frac{2-(1-\Delta)\langle b(\varepsilon)\rangle}{2+2\Delta}.
\end{align}
\end{theorem}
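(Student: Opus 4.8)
The plan is to mirror the conditioning argument behind \Cref{theo:2-item}, but to carry the noise value $\Delta$ through the decision function rather than setting the outside-ball similarity to zero. The key structural fact, inherited from the noise-free case, is that $g_{\varepsilon;\Delta}(x_i,p)$ depends on the pair $(x_i,p)$ only through whether $x_i$ lies inside the closed ball $B_\varepsilon(p)$: it equals $1$ on the ball and $\Delta$ outside. Fixing the probe $p$ and writing $q:=b_p(\varepsilon)$ for the conditional mass of the ball, the two stimuli $x_1,x_2$ (i.i.d.\ from $\nu$, independent of $p$) fall into one of three configurations, with conditional probabilities $q^2$ (both inside), $2q(1-q)$ (exactly one inside), and $(1-q)^2$ (both outside).

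First I would treat the similarity test. The decisive observation is that membership in the closed ball orders the distances: if one stimulus is inside and the other outside, then $d(x_\text{in},p)\le\varepsilon<d(x_\text{out},p)$, so the inside stimulus is the unique closest point and hence the true answer $X$. Evaluating $D_X = g(x_X,p)/\bigl(g(x_1,p)+g(x_2,p)\bigr)$ in each configuration gives $1/2$ when both are inside (both similarities equal $1$), $1/(1+\Delta)$ when the closest is the unique inside stimulus, and $1/2$ when both are outside (both similarities equal $\Delta$); note that in the two symmetric cases the value is $1/2$ regardless of which stimulus is actually closer, so $X$ need not be resolved there. Averaging against the three weights and using $q^2+(1-q)^2 = 1-2q(1-q)$ collapses the conditional success probability to $\tfrac12 + \tfrac{1-\Delta}{1+\Delta}\,q(1-q)$. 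Taking the expectation over $p\sim\nu$ and recognizing $\mathbb{E}_p[q(1-q)] = \langle b(\varepsilon)\rangle - \langle b(\varepsilon)^2\rangle$ yields the claimed expression for $p_S(\varepsilon,\Delta)$.

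Next I would handle the identification test. Here the probe coincides with an input, say $p=x_1$ by the symmetry among the exchangeable stimuli, so $g(x_1,x_1)=1$ always and the decision for the correct index reduces to $D_1 = 1/\bigl(1+g(x_2,x_1)\bigr)$. This equals $1/2$ precisely when $x_2\in B_\varepsilon(x_1)$ and $1/(1+\Delta)$ otherwise, events of probability $\langle b(\varepsilon)\rangle$ and $1-\langle b(\varepsilon)\rangle$ after averaging over $x_1$. Combining these two contributions and simplifying over the common denominator $2(1+\Delta)$ produces the stated $p_I(\varepsilon,\Delta)$. Finally I would verify the sanity limit $\Delta=0$: the prefactor $\tfrac{1-\Delta}{1+\Delta}\to 1$ and $\langle b(\varepsilon)\rangle-\langle b(\varepsilon)^2\rangle = \langle b(\varepsilon)\rangle-\langle b(\varepsilon)\rangle^2-\mathrm{Var}(b(\varepsilon))$, so both formulas recover \Cref{theo:2-item}.

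The only genuine subtlety—and the step I would be most careful about—is justifying the clean three-way case split almost surely. This rests on the absolute-continuity hypothesis on $b_p$: it forces the boundary sphere $\{y : d(y,p)=\varepsilon\}$ to be $\nu$-null, so with probability one no stimulus sits exactly on the resolution boundary and the inside/outside classification (and hence the distance ordering used above) is unambiguous. Everything else is the same bookkeeping as in \Cref{theo:2-item} with the constant $0$ replaced by $\Delta$, so no genuinely new estimate is required.
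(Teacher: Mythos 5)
Your proof is correct, and it rests on exactly the same three-way case decomposition as the paper's (both stimuli inside the ball, both outside, exactly one inside), with identical contributions $\tfrac12 b_p(\varepsilon)^2$, $\tfrac12(1-b_p(\varepsilon))^2$, and $\tfrac{2}{1+\Delta}b_p(\varepsilon)(1-b_p(\varepsilon))$. The one place you diverge is in how the weights of these cases are obtained: the paper changes coordinates to the pushforward measure $S_p$ and evaluates ordered double integrals such as $\int_{(\varepsilon,\infty]}(1-b_p(r_1))\,b_p'(r_1)\,d\mu(r_1)$ via the fundamental theorem of calculus for absolutely continuous functions, whereas you observe that the decision value is constant on each configuration, so only the binomial probabilities $q^2$, $(1-q)^2$, $2q(1-q)$ with $q=b_p(\varepsilon)$ are needed. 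This is a genuine (if modest) simplification: it sidesteps the integration machinery entirely for this theorem, at the cost of not reusing the intermediate formulas (e.g.\ \Cref{eq:proof_prob_distances}, \Cref{eq:case_a}) that the paper's appendix leans on for the $n$-item and linear-decay results. Two small points to make explicit: for $\Delta=0$ the ``both outside'' value $\Delta/(\Delta+\Delta)$ requires the paper's convention $0/(0+0)=1/2$, which you should state since the theorem allows $\Delta\geq 0$; and the probability-zero event you need to discard is not the boundary sphere (membership in the closed ball is always unambiguous) but the tie $d(x_1,p)=d(x_2,p)$ handled by \Cref{lemma:only_one} — and even that only matters in the two symmetric configurations, where the answer is $1/2$ regardless.
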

\vspace{-5mm}
\begin{proof}
The proof can be found in \Cref{proof:noise}.
\end{proof}
The effect of noise can be appreciated in \Cref{fig:curves_theorem}a as a monotonous decrease in both $p_S$ and $p_I$.

\paragraph{Processing of multiple stimuli.}
The foregoing analyses may provide a formal account of why humans and large neural networks alike exhibit dramatic processing constraints in simple tasks (e.g. visual working memory tasks and numerosity judgments), that demand simultaneous processing of multiple stimuli \citep{campbell2024understanding}. 
On the one hand, these tasks typically demand generalization (e.g., the processing of stimuli that involve arbitrary combinations of features, such as color, shape and position). 
On the other hand, performance is typically evaluated based on identification accuracy by identifying individual stimuli. 
The results above thus suggest that these competing demands run up against the fundamental tension between identification and generalization accuracy,  irrespectively of scale or architecture (i.e., even in systems with billions of parameters, such as VLMs or the human brain). 
When such systems intrinsically value and/or are trained explicitly for generalization, then they will 
position themselves into the \textcolor[HTML]{3B0F70}{low}-\textcolor[HTML]{B73779}{medium} resolution/semanticity regime ( \Cref{fig:curves_theorem}a).
Indeed, we can show this is the case by explicitly deriving probabilities of success for $n$-item similarity and identification tasks.
\begin{theorem}[$n$-item tests]\label{theo:n-item}
Under the same assumptions of \Cref{theo:2-item}, for the constant noise-free ($\Delta=0$) similarity function $g =g_{\varepsilon;0}$ we have that
\begin{align}
p_{S}^n(\varepsilon) &= \mathbb{E}_{p\sim \nu}\left[\frac{1}{n} + \sum_{k=1}^{n-1} \frac{(1-b_p(\varepsilon))^{n-k} - (1-b_p(\varepsilon))^n}{k}\right], \label{eq:n_item_S}\\
p_{I}^n(\varepsilon) &= \mathbb{E}_{p\sim \nu}\left[\frac{1-(1-b_p(\varepsilon))^n}{nb_p(\varepsilon)}\right].\label{eq:n_item_I}
\end{align}
\end{theorem}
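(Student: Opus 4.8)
The plan is to exploit the fact that, for the noise-free constant similarity $g_{\varepsilon;0}$, the weight $g(x_i,p)=\ones_{B_\varepsilon(p)}(x_i)$ is an \emph{indicator}: the Luce rule of \Cref{eq:decision_sim} therefore assigns equal probability to every stimulus lying inside the ball $B_\varepsilon(p)$ and zero to the rest. Writing $K:=\#\sset{i: d(x_i,p)\le\varepsilon}$ for the number of stimuli caught by the ball, the model picks uniformly among those $K$ candidates, while if $K=0$ the denominator vanishes and I adopt the maximal-uncertainty convention $D_i\equiv 1/n$ (consistent with the chance-level behaviour noted in \Cref{section:setup}). First I would invoke the absolute-continuity hypothesis on $b_p$: it forces every sphere $\sset{y:d(p,y)=\varepsilon}$, and the set of distance-ties, to be $\nu$-null, so that the argmin $X$ is a.s.\ unique and boundary membership is negligible. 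The key geometric observation is that whenever $K\ge 1$ the nearest stimulus $x_X$ necessarily sits inside $B_\varepsilon(p)$, hence $\Pro(Y=X\mid x_1,\dots,x_n,p)=1/K$ on $\sset{K\ge1}$ and $=1/n$ on $\sset{K=0}$.

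Conditioning on $p$, the events $\sset{d(x_i,p)\le\varepsilon}$ are i.i.d.\ Bernoulli$(b_p(\varepsilon))$, so $K\mid p\sim\mathrm{Binomial}(n,b_p(\varepsilon))$. Abbreviating $b:=b_p(\varepsilon)$, the conditional success probability becomes
\begin{equation*}
\frac{(1-b)^n}{n}+\sum_{k=1}^n\binom{n}{k}b^k(1-b)^{n-k}\frac1k .
\end{equation*}
The only real computation is the sum $\sum_{k\ge1}\binom{n}{k}b^k(1-b)^{n-k}/k$. I would evaluate it with the representation $1/k=\int_0^1 t^{k-1}\,dt$, which turns the sum into $\int_0^1 t^{-1}\bigl[(1-b+bt)^n-(1-b)^n\bigr]\,dt$; the substitution $u=1-b+bt$ and the factorization $(u^n-c^n)/(u-c)=\sum_{j=0}^{n-1}u^jc^{\,n-1-j}$ with $c=1-b$ give $\sum_{m=1}^n\bigl[(1-b)^{n-m}-(1-b)^n\bigr]/m$. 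Combining the $m=n$ term with the isolated $(1-b)^n/n$ collapses them to $1/n$, leaving exactly the bracket of \Cref{eq:n_item_S}; taking $\mathbb{E}_{p\sim\nu}$ finishes the similarity part.

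For identification the probe coincides with one of the stimuli; by exchangeability I may fix it to be $x_n$, whence $X=n$ a.s.\ (the probe is its own unique nearest point). Since $g(x_n,x_n)=\ones_{B_\varepsilon(x_n)}(x_n)=1$, the ball is never empty and the within-ball count is $K=1+L$ with $L\sim\mathrm{Binomial}(n-1,b)$ tallying the remaining stimuli, so the conditional success probability is $\mathbb{E}[1/(1+L)]$. Here I would use the reindexing identity $\binom{n-1}{\ell}/(\ell+1)=\binom{n}{\ell+1}/n$ together with the binomial theorem to get $\mathbb{E}[1/(1+L)]=\bigl(1-(1-b)^n\bigr)/(nb)$, and then average over $p$ to obtain \Cref{eq:n_item_I}. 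As a sanity check, setting $n=2$ in both formulas recovers \Cref{eq:sim_2,eq:ide_2}.

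The main obstacle is bookkeeping rather than depth: correctly isolating and then reabsorbing the empty-ball ($K=0$) term so that the stray $(1-b)^n/n$ merges into the clean $1/n$, and carrying the combinatorial identity for $\sum_k\binom{n}{k}b^k(1-b)^{n-k}/k$ without sign or index slips. A secondary subtlety is the rigorous justification, via absolute continuity, that spheres and distance-ties carry no mass, so that $X$ is well defined and the counts $K$ are genuinely binomial.
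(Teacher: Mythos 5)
Your proof is correct, and for the similarity test it takes a genuinely different and arguably cleaner route than the paper's. The paper conditions on the distance $r_1$ of the closest stimulus and integrates over it with the pushforward measure $S_p$, using absolute continuity and the fundamental theorem of calculus to evaluate terms like $\int_{[0,\varepsilon]}(b_p(\varepsilon)-b_p(r_1))^k b_p'(r_1)\,d\mu(r_1)=b_p(\varepsilon)^{k+1}/(k+1)$; your key observation --- that for the indicator similarity the conditional success probability depends \emph{only} on the binomial count $K$ of stimuli in $B_\varepsilon(p)$ (equal to $1/K$ when $K\ge1$, $1/n$ when $K=0$), because the argmin automatically lies in the ball whenever the ball is nonempty --- bypasses that integration entirely and reduces the whole computation to $\mathbb{E}[h(K)]$ for $K\mid p\sim\mathrm{Binomial}(n,b_p(\varepsilon))$. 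Absolute continuity then only enters where it must, via the paper's Lemma~\ref{lemma:only_one}, to guarantee a.s.\ uniqueness of the argmin. You also replace the paper's generating-function proof of the combinatorial identity (Lemma~\ref{lemma:comb_identitiy}) with the integral representation $1/k=\int_0^1 t^{k-1}dt$ plus the factorization of $u^n-c^n$, which is more elementary and self-contained. Your identification argument ($K=1+L$ with $L\sim\mathrm{Binomial}(n-1,b)$ and the identity $\binom{n-1}{\ell}/(\ell+1)=\binom{n}{\ell+1}/n$) coincides with the paper's. The tradeoff: the paper's distance-conditioning machinery is what generalizes directly to non-constant similarities (it is reused verbatim for Theorem~\ref{theo:noise} and Proposition~\ref{prop:lineardecay}), whereas your counting argument is specific to the indicator case but makes the probabilistic structure of that case transparent.
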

\begin{proof}
The proof can be found in \Cref{appendix:proof_3}.
\end{proof}

\begin{figure}
    \centering
    \includegraphics[width=\linewidth]{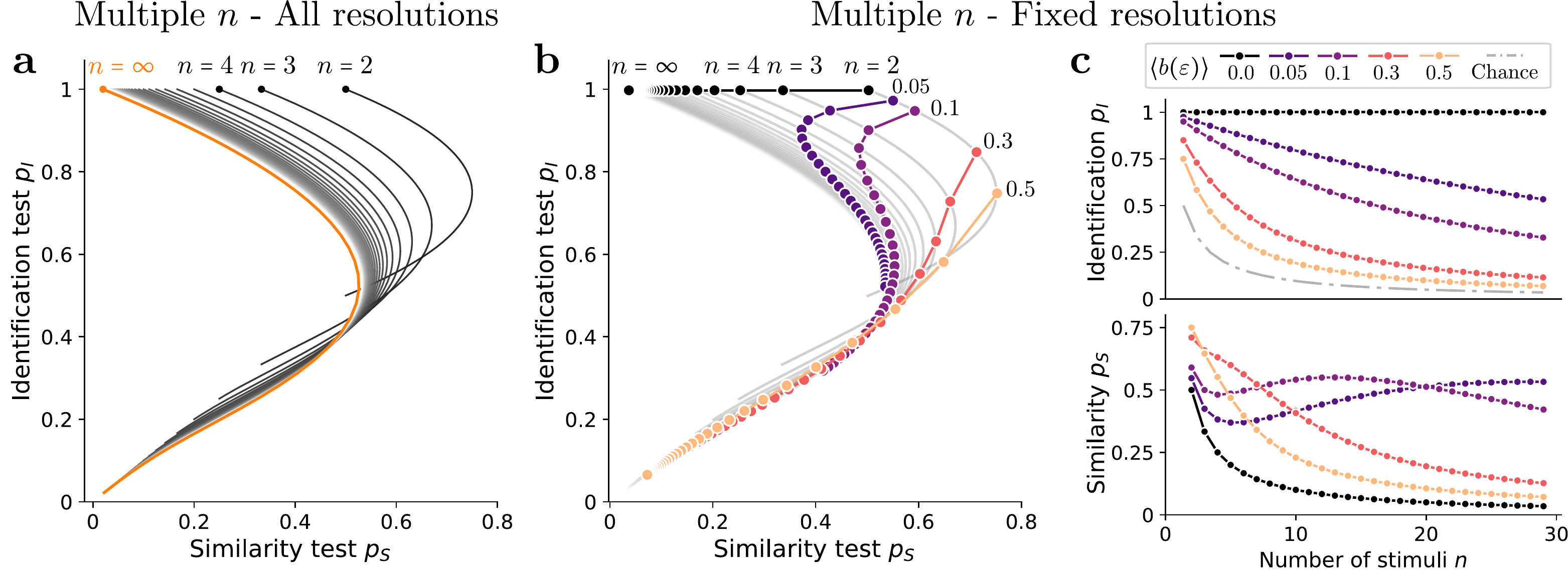}
    \caption{{\bf a.} Similarity-identification curves for different values of $n$ and parameterized  by $b_p(\varepsilon)\in [0,1]$, as described by \Cref{eq:n_item_S,eq:n_item_I}. {\bf b.} The colored curves correspond to similarity-identification values as the number of inputs $n$ varies, for some fixed values of $b_p(\varepsilon)$. {\bf c.} Similarity (top) and identification (bottom) dependence on $n$ for different resolutions.}
    \label{fig:n_items}
\end{figure}
First, note that, despite their apparently complicated formulations, \Cref{eq:n_item_S,eq:n_item_I} are polynomials in $b_p(\varepsilon)$ for any fixed $n$ and, given their non-linearity, the expected value over the probes cannot be simplified in general. 
Thus, for simplicity, we focus on the \emph{homogeneous} case where $b_p(\varepsilon) = b(\varepsilon)\ \forall p\in M$ and $\mathbb{E}$ disappears.

Under this assumption, both similarity and identification performances are once again parameterized by $b(\varepsilon)$, yielding universal pareto curves independent of $M$.
\Cref{fig:n_items}a shows the shape of the Pareto front for different values of $n$.
As a sanity check, note that, as the resolution goes to $b(\varepsilon)=0$, performance approaches perfect identification for any number of simultaneous inputs with no capacity to generalize $p_S^n(0) = 1/n$ (chance level).

As shown in \Cref{fig:n_items}(b,c), the mapping of one curve into the next is not \virg{uniform}. 
For any fixed  $\varepsilon>0$, increasing the number of inputs quickly degrades both identification and generalization performances. 
Furthermore, \Cref{eq:n_item_I} shows that for large $n$, $p_I^n(\varepsilon) \approx (b(\varepsilon)n)^{-1}$: identification performance decrease as $1/n$ with a rate given by $b(\varepsilon)$.
For a model tasked with learning structured representations of the input space, and thus optimizing for generalization (say, $ b(\varepsilon) \approx 1/2$ for $n=2$), our analyses predict that the capacity to accurately process multiple representations at the same time will be strongly constrained (\Cref{fig:n_items}c).

Interestingly, the bottom panel of \Cref{fig:n_items}c shows that the probability of success in the similarity test is non-monotonic in $n$ when $b(\varepsilon)$ is small.
Thus, when the model has to deal with a high number of items, it is convenient for it to pick low resolutions. 
The cost, however, is paid by the significant increase in error for low numbers of items. 

These observations provide an elegant explanation for why even large neural network models struggle with multi-object reasoning \cite{campbell2024understanding}: they likely have developed representations that support generalization, but this brings  a $1/n$ decrease in identification probability as the number $n$ of objects increase, thus generating the striking capacity limits observed in both humans and large vision-language models. 
In the next section, we provide empirical evidence that neural networks obey these constraints, first in a simple con model, and then in multiple large scale networks.

\section{Toy neural network implementation}\label{section:nn}
\begin{figure}[htbp]
  \centering
  \begin{minipage}{0.5\textwidth}
    \includegraphics[width=\linewidth]{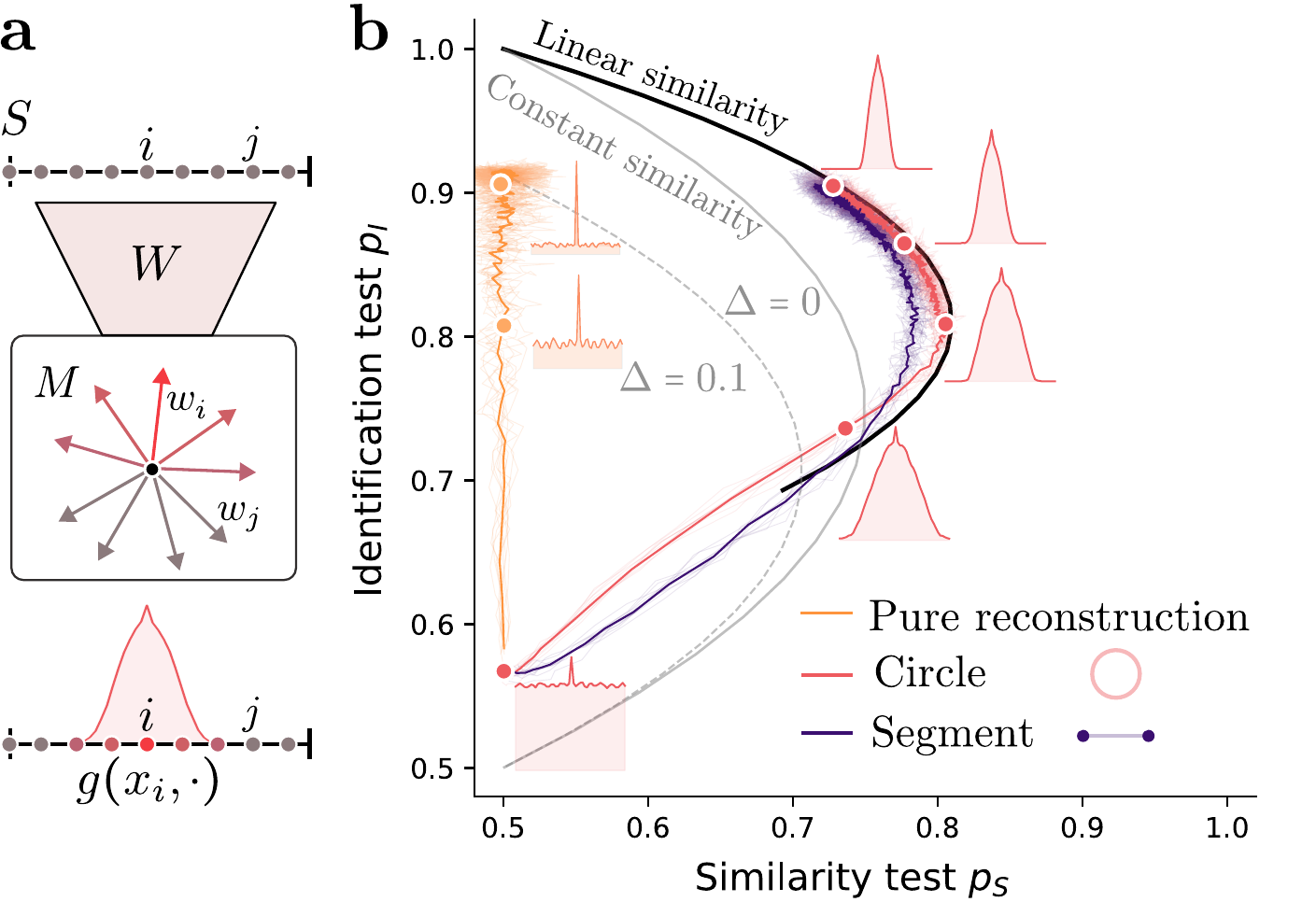}
  \end{minipage}%
  \hfill
  \begin{minipage}{0.5\textwidth}
    \captionof{figure}{\textbf{Emergent resolution and tradeoff in toy architecture.} ($p_S,p_I$) results for the toy model (\textbf{a}) of \Cref{section:nn} with 50 inputs. \textbf{b.} The orange curve shows the average training trajectory for a purely reconstruction loss. The orange insets show the learned (average) similarity function at two epochs. The gray and dashed lines show the curves of \Cref{theo:2-item} with noise levels $\Delta=0,0.1$, respectively. The red curve shows the average training trajectory when the loss is based on the similarity test on a circle while the purple one is trained on stimuli coming from a segment. The black line shows the theoretical performances obtained with linearly decaying similarity functions, as in \Cref{prop:lineardecay}.}
    \label{fig:neural_net}
  \end{minipage}
\end{figure}
We start from the toy architecture of \citet{elhage2022toy}, that permits a direct comparison with the analyses above.
The input vector $x\in \mathbb{R}_+^l$, whose entries we identify with features, is linearly encoded by $W \in \mathbb{R}^{m\times l}$, decoded by $W^\top$, and then the ReLU activation function $\sigma$ is applied elementwise $f(x)=\sigma(W^\top W x)$ (\Cref{fig:neural_net}a).
When trained with reconstruction MSE loss and sparse inputs, this model displays the phenomenon known as \emph{superposition}: features associated with input dimensions are represented as orthogonally (or \emph{dissimilarly}) as possible to minimize their interference in reconstruction \citep{elhage2022toy}.
This, in turn, means striving for good identification performance and thus the capability of processing a large number of features simultaneously.

We contrast this with the effect of inducing the model to learn representations with simple forms of metric (semantic) structure. 
To do so, we consider two spaces of stimuli made of $l$ points $\sset{x_1,\dots,x_l}$ equally spaced in the interval $[0,1]$: a (flat) circle, with distance $d(x,y)=\min(|x-y|,1-|x-y|)$, and a segment, with distance $d(x,y)=|x-y|$.
The model was trained 
to perform $3$-items similarity tests (as explained in \Cref{section:setup})
on the metric space by encoding its points, the stimuli, as $l$-dimensional one-hot vectors.
Given this last assumption, the $i$-th column of $W$, $w_i$, can be interpreted as the latent embedding of $x_i$, and the model's output $f(x_i)_j = \sigma(w_j^\top w_i) := g(x_i,x_j)$ as the non-negative similarity between $x_i$ and $x_j$. 
The model was trained to convergence 10 times and, for each epoch, and we recorded the average similarity and identification ratios $p_S,p_I$ of \Cref{eq:decision_sim,eq:decision_id} using the learned $g$.

\Cref{fig:neural_net}b shows the resulting training trajectories for three different runs in the similarity-identification plane: the \textcolor[HTML]{fea964}{orange} run corresponds to trainings with pure reconstruction loss, in \textcolor[HTML]{ed5a5f}{red} the run with pure similarity task loss on the circle and in \textcolor[HTML]{3b0f70}{purple} on the segment.
In all cases, we used $l=50$ stimuli, a hidden dimension of $m=10$ and repeated the experiment $10$ times.
Additional details of the experiment and results can be found in \Cref{appendix:extra_numerics}. 

As expected, when the network is trained only on reconstruction loss, there is no improvement in $p_S$ but a steady increase in $p_I$.
Features are arranged as orthogonally as possible but, due to the low number of hidden dimensions, some interference between them remains.
If features are arranged on a line, visualizing  the learned similarity function $g(x,\cdot)$ for a fixed $x$ at the last training step shows that it is close to being a Dirac delta on $x$, with smaller-scale random-like noise on other features.
Estimating this noise scale $\Delta$ and using that in the equations given by \Cref{theo:noise}, shows that the corresponding dashed curve accurately predicts the value of $p_I$ at which the training stops.

In contrast, when the network is trained on the semantic task, 
\Cref{fig:neural_net}b shows that (starting from the bottom left corner) both $p_S$ and $p_I$ increasing up until the \virg{boundary} is reached, after which similarity begins to decrease. 
Note that the learned similarity functions $g(x,\cdot)$ for a fixed $x=0.5$ (the red insets) exhibit a transition from noise to a semantic function that respects the structure of the circle. 
Furthermore, this structure also exhibits sensitivity to resolution: the model arranges features associated with points further than a certain threshold to have a negative inner product, which is then mapped to zero by the ReLU activation.
Moreover, we see that this resolution decreases as training progresses, resulting in an increase of $p_I$ and a decrease in $p_S$. 

Not surprisingly, the neural network does not learn constant similarity functions (defined in \Cref{section:setup}), and thus the predictions given by \Cref{theo:2-item} (in \textcolor[HTML]{828282}{gray}) only provide a qualitative prediction.
However, the \textit{learned} similarity function $g(x,\cdot)$ appears to be approximately linearly decaying with distance on the circle.
Based on this observation, we can analytically derive the values of $p_S$ and $p_I$ for linearly decaying similarities in a circle, finding formulae that approximate \Cref{theo:2-item}.
\begin{proposition}[Linear decay]\label{prop:lineardecay}
On the flat circle $[0,1]$ with $d(x,y)=\min(|x-y|,1-|x-y|)$ sampled with the uniform measure, for the two-item similarity and identification tests with linearly decaying similarity $g(x,y)= \max\left(0,1-\frac{d(x,y)}{\varepsilon}\right)$,
\begin{align}
p_S(\varepsilon)=\frac{1}{2}+b(\varepsilon)-\left(\frac{3}{2}-\log(2)\right)b(\varepsilon)^2,
\ \ \ p_I(\varepsilon)=1-(1-\log(2))b(\varepsilon),
\end{align}
with $b(\varepsilon) = 2\varepsilon,\ \varepsilon\in [0,1/2]$.
\end{proposition}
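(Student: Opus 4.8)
The plan is to exploit the fact that on the uniform flat circle the relevant pairwise distances are i.i.d. with a trivial density, which collapses both computations to low-dimensional integrals. First I would record the key distributional fact: if $x$ and $p$ are independent and uniform on $[0,1]$, then $d(x,p)=\min(|x-p|,1-|x-p|)$ is uniform on $[0,1/2]$ with density $2$, since the wrapped difference is uniform on $[0,1)$ and $u\mapsto\min(u,1-u)$ doubles the density. Consequently $b(\varepsilon)=\nu(B_\varepsilon(p))=2\varepsilon$ for $\varepsilon\in[0,1/2]$, which fixes the reparametrization appearing in the statement.

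For $p_I$, the probe coincides with one stimulus, say $x_1$, so $g(x_1,x_1)=1$ while the competing similarity is $g(x_2,x_1)=\max(0,1-r/\varepsilon)$ with $r=d(x_1,x_2)$ uniform on $[0,1/2]$. By symmetry $p_I=\mathbb{E}[1/(1+g(x_2,x_1))]$, a one-dimensional integral that I split at $r=\varepsilon$: for $r>\varepsilon$ the integrand is $1$, while for $r\le\varepsilon$ it equals $1/(2-r/\varepsilon)$, and this second piece is exactly what produces the $\log 2$ term. Collecting the two contributions gives $p_I=2\varepsilon\log 2 + (1-2\varepsilon)=1-(1-\log 2)\,b(\varepsilon)$.

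For $p_S$, I would first use the exchange symmetry of $x_1,x_2$ to write $p_S=2\,\mathbb{E}[D_1\ones_{r_1<r_2}]$, where $r_i=d(x_i,p)$ are i.i.d. uniform on $[0,1/2]$ and $D_1=g(x_1,p)/(g(x_1,p)+g(x_2,p))$, adopting the convention $D_1=1/2$ when both similarities vanish (the tie in the ``noise region''). Partitioning $\{r_1<r_2\}$ according to whether each $r_i$ lies below or above $\varepsilon$ yields three pieces: both outside ($D_1=1/2$, contributing over a triangle of area $\tfrac12(1/2-\varepsilon)^2$), exactly one inside ($D_1=1$, over a rectangle of area $\varepsilon(1/2-\varepsilon)$), and both inside, which is the only nontrivial term. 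After rescaling $u=r_1/\varepsilon$, $v=r_2/\varepsilon$, the both-inside contribution becomes $4\varepsilon^2 I$ with $I=\int_0^1\!\int_0^v \frac{1-u}{2-u-v}\,du\,dv$.

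The crux of the proof, and where I expect the real work to lie, is evaluating $I$. I would perform the inner $u$-integral via the substitution $w=2-u-v$, producing $v+(v-1)\log\frac{2-v}{2(1-v)}$, then reduce the outer integral (after $t=1-v$) to the standard values $\int_0^1 t\log(1+t)\,dt=\tfrac14$ and $\int_0^1 t\log t\,dt=-\tfrac14$; the logarithmic pieces conspire so that $I=\tfrac{\log 2}{2}$. Summing the three contributions and doubling gives $p_S=\tfrac12+2\varepsilon-6\varepsilon^2+4\varepsilon^2\log 2$, and substituting $\varepsilon=b/2$ recovers $p_S=\tfrac12+b-(\tfrac32-\log 2)\,b^2$. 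The only subtlety beyond careful bookkeeping is fixing the $0/0$ tie-breaking convention consistently with the constant-similarity analysis of \Cref{theo:2-item}; everything else is routine, if somewhat delicate, integration.
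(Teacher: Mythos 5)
Your proposal is correct and follows essentially the same route as the paper's proof: the same uniform distance density $2$ on $[0,1/2]$ (so $b(\varepsilon)=2\varepsilon$), the same three-way split according to whether each distance exceeds $\varepsilon$ with the $0/0=1/2$ convention, and the same reduction of the only nontrivial piece to a double integral contributing $\log(2)\,b(\varepsilon)^2$ to $p_S$ (and $\log(2)\,b(\varepsilon)$ to $p_I$). The only cosmetic difference is the order of integration in that piece: the paper integrates over the farther stimulus first, so the inner integral collapses directly to $(\varepsilon-r_1)\log 2$ and the outer integral is trivial, whereas your order requires the auxiliary evaluations $\int_0^1 t\log(1+t)\,dt=\tfrac14$ and $\int_0^1 t\log t\,dt=-\tfrac14$ — both ways give $I=\tfrac{\log 2}{2}$ and the stated formulas.
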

\begin{proof}
The proof can be found in \Cref{proof:lineardecay}.
\end{proof}
\Cref{fig:neural_net} shows how the resulting curve (in \textbf{black}) provides a good fit to the empirical result.
Finally, when the metric space is a segment instead of a circle (\textcolor[HTML]{3b0f70}{purple}), the heterogeneity given by the presence of the two endpoints results in an overall reduced $p_S$, as qualitatively predicted by \Cref{theo:2-item}. 

\section{Evidence of tradeoff in realistic neural networks}
\label{section:realistic-nn}
Finally, we summarize experiments and results showing that the effects described above are also observed in networks at scale.
We report details on implementations and additional results in \Cref{appendix:extra_numerics}. 

\begin{figure}[b!]
    \centering
    \includegraphics[width=\linewidth]{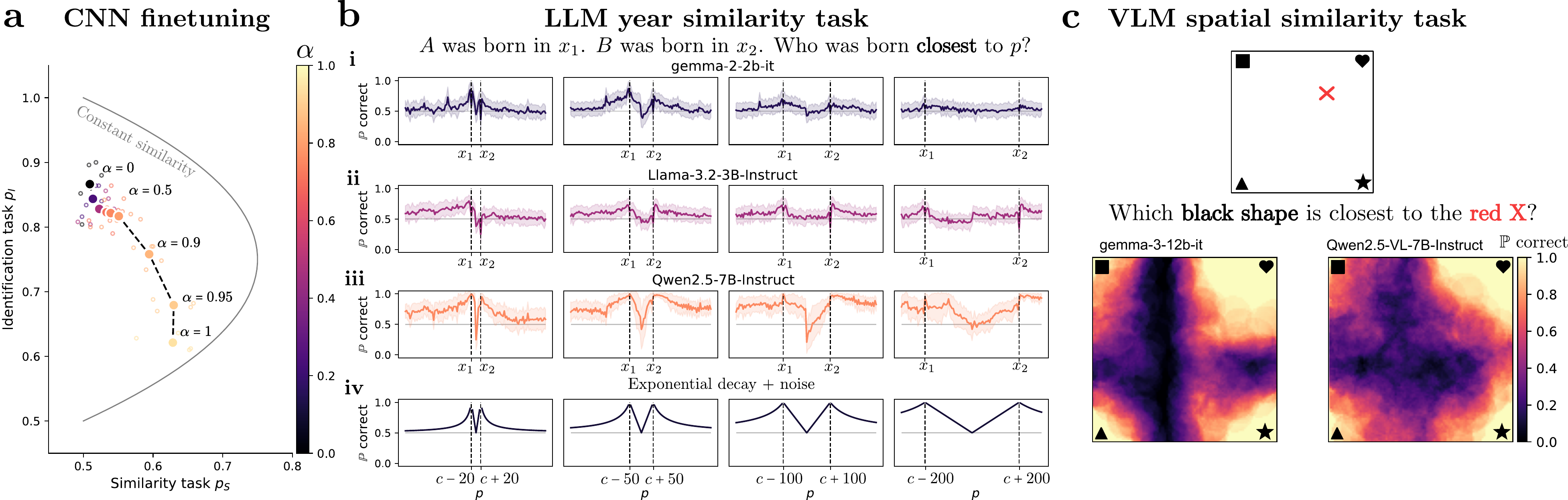}
    \caption{\textbf{Empirical resolution tradeoffs across realistic neural architectures}. 
    (a) a CNN fine-tuned on bird recognition shows tradeoff between species identification and generalization to phylogenetic similarity as a function of the weights of generalization $\alpha$ and of the resolution $\epsilon$. 
    (b) LLMs tasked with comparing years of birth show different regimes of performances, compatible with the existence of an emergent finite resolution ($\sim  70$–$80$ years). 
    (c) VLMs tasked with spatial proximity tasks show decreased accuracy beyond a model-specific resolution scale. 
    Details in \cref{appendix:extra_numerics}.}    
    \label{fig:experiments}
\end{figure}
\textbf{CNNs and evolutionary distance}
We fine-tuned a ResNet-50 model \citep{he2016deep} to analyze the generalization-identification tradeoff on bird species images \citep{wah2011caltech} using a weighted loss function $\mathcal{L} = (1-\alpha)\,\mathcal{L}_{\text{id}} + \alpha\,\mathcal{L}_{\text{sim}}$, where $\alpha$ controls the bias between identification and generalization.
Both tasks employed a triplet design ($x_1$, $x_2$, and $p$): for generalization, the model judged which reference is evolutionarily closer to the probe, using phylogenetic distances as ground truth \citep{kumar2022timetree}; for identification, it determined the reference species to which the probe belonged. 
We found that increasing $\alpha$, as a manipulation of similarity,  improved generalization while reducing identification accuracy, conforming to the relationships reported above (\Cref{fig:experiments}a). 
Models with higher $\alpha$ values consistently showed enhanced generalization, confirming the ability to manipulate this tradeoff through both training and threshold parameters.

\textbf{Year similarity task in LLMs.}
As a second test, we evaluated three open-source large language models (LLMs) (gemma-2b-it \citep{team2024gemma}, Llama-3.2-3B-Instruct \citep{grattafiori2024llama} and Qwen2.5-7B-Instruct \citep{yang2024qwen2}) on a similarity task requiring temporal discriminations on the scale of years. 
The models were prompted to answer questions of the following type \virg{\emph{A was born in $x_1$. B was born in $x_2$. Who was born closest to $p$?}}, where A, B are randomized names, a center year $c$ is sampled in $[1500,1700]$, $x_1 = c-\delta x,x_2=c+\delta x$ for $\delta x \in\sset{20,50,100,200}$ and $p = c + \delta p$ takes all years in $[c-300,c+300]$. 
\Cref{fig:experiments}b shows the decision curves indicating the empirical probability with which each model responded with the correct answer.  
This shows that the models' year representations closely follow our assumptions about resolution: all models showed decreased performance as probe dates moved further from reference dates, similar to what we observed with exponentially decaying similarities with noise $g(x_1,x_2)=\exp(-\mu d(x_1,x_2)) + \Delta$ (bottom row).

\textbf{Spatial similarity task in VLMs.}
Finally, we tested the effects of resolution in two Vision-Language Models (VLMs) (gemma-3-12b-it \citep{team2024gemma,gemma_techrep} and Qwen2.5-VL-7B-Instruct \citep{yang2024qwen2,qwen_techrep}), on a visual spatial  similarity task.
Four different black shapes were presented to the model in the four corners  of the image (\Cref{fig:experiments}c), together with a red cross in a random position. 
The model was tasked with indicating which black shape was closest to the red cross, and we recorded accuracy for each sampled position.
\Cref{fig:experiments}c shows that, once again, the models display clear resolution limits in their generalization capabilities, similar to those observed in the year task. 
\section{Discussion}
We have provided a formal theory of the tradeoff between identification and generalization in systems constrained by finite semantic resolution, building on the formal framework of \citet{frankland2021no}. 
Our closed form expressions reveal a universal Pareto front determined by resolution scale and stimulus geometry--a fundamental limit that is obeyed in empirical tests of model architectures both small and large.

Our analysis identifies the optimal resolution for generalization, at which semantic similarity functions tile approximately half of the representational space in discrimination tasks \citep{sorscher2022neural}. 
Beyond this point, increasing resolution impairs identification as representations become too broadly generalized.
Below it, representations are discriminable, but fail to capture meaningful similarities, thus compromising generalization. 
This offers an explanation for why both humans and state-of-the-art neural network models struggle with multi-object reasoning, despite their vast computational resources and remarkable capabilities in other domains.

The spontaneous emergence of this tradeoff across architectures--from minimal ReLU networks to vision-language models--is consistent with our analyses and our empirical findings, that are unified under the hypothesis that finite semantic resolution constitutes an information-theoretic constraint rather than implementation artifact. 
This, in turn, provides a rigorous mathematical foundation for understanding capacity limits in both artificial and biological systems. 

Our theory also indicates how competing representational strategies of intelligent systems are tied to one another: identification demands sharp, distinct representations, while generalization requires coarse, overlapping ones. 
This tension is echoed in neuroscience literature on \emph{representational efficiency} (coding related items compactly) versus \emph{processing efficiency} (handling multiple items jointly) \cite{Petri24,petri2021topological,Lesnick20}. 
Our analyses also provide a formal explanation for empirical observations in neural population coding \citep{cohen2020separability, Ganmor15}, where semantically clustered "neural thesaurus" structures emerge as optimal strategies under noise constraints, connecting to earlier models of representational redundancy \citep{curto2013combinatorial}. 

\textbf{Limitations.}
The present model assumes non-compositional representations, 
which cannot capture phenomena such as hierarchical syntax, analogical reasoning, or arithmetic—where representations are formed by systematic combinations of simpler parts \cite{LakeBaroni23,FodorPylyshyn98}. 
Extending our framework to compositional coding schemes remains an important future direction.

\textbf{Future Work.}
Future work could further extend our results by: (1) using \emph{synergy–redundancy decompositions} \cite{proca2024synergistic} to examine how generalization shapes joint encoding of multiple stimuli;
(2) developing resolution-based diagnostic tools for optimizing neural architectures by targeting task-appropriate generalization-identification balance; and
(3) testing whether neural manifolds from fMRI or electrophysiology exhibit comparable resolution bounds, potentially establishing semantic resolution as a measurable link between neural geometry and behavioral generalization.

\putbib[biblio]
\end{bibunit}

\clearpage
\newpage
\makeatletter

\begin{bibunit}[unsrtnat]
\appendix

\ifthenelse{\boolean{includeappendix}}{

\section{Appendix / supplemental material}
\subsection{Technical details}\label{section:techinical} 

In this section, we formalize the technical aspects and assumptions required for the results of the paper.

We assume $(M,d,\Sigma,\nu)$ to be a separable metric measure space equipped with the standard metric space topology, the Borel $\sigma$-algebra $\Sigma$ generated by balls in $M$ and with a probability measure $\nu$.
This measure $\nu$, which is such that $\nu(M)=1$, determines how we are sampling stimuli from the stimulus space $M$.

In the following derivations, we will make use of two objects: 
\begin{itemize}
    \item $b_p(\varepsilon) = \nu(B_\varepsilon(p))$, the measure of the ball of radius $\varepsilon$ centered in $p$;
    \item $S_p$, the push-forward measure on $[0,\infty]$ of the distance function in $p$, $d_p(\cdot)=d(p,\cdot)$ i.e., for any measurable subset of $\mathbb{R}$, $S_p(E) = \nu(d_p^{-1}(E))$.
\end{itemize}
Note that $b_p$ is the cumulative distribution function of $S_p$ as $S_p((-\infty,\varepsilon]) = S_p([0,\varepsilon]) = b_p(\varepsilon)$ and therefore it is non-decreasing. 
We also have that $b_p(\infty):=\lim_{\epsilon\to\infty}b_p(\varepsilon)=1$.

We assume that $b_p$ is an absolutely continuous function on 
every closed sub-interval of $[0,\infty)$ i.e. such that for every $\epsilon>0$ there exists $\delta >0$ such that for any finite set of disjoint intervals $(\alpha_1,\beta_1),\dots,(\alpha_N,\beta_N)$ 
\[
\sum_{i=1}^N (\beta_i - \alpha_i) < \delta \implies \sum_{i=1}^N (b_p(\beta) - b_p(\alpha))<\epsilon.
\]
By \citet{nielsen1997introduction} (Theorem 20.10), the absolute continuity of $b_p(\varepsilon)$ implies that $S_p$ is an absolutely continuous measure w.r.t. the Lebesgue measure $\mu$.
This implies, by Radon-Nikodym theorem, that $S_p$ admits a density $f$, $S_p = \int f d\mu$, with $f(\varepsilon) = b_p'(\varepsilon)$ almost everywhere.
In fact, we can think of absolute continuity as a stronger notion of continuity, as the fundamental theorem of calculus for Lebesgue integrals (\citet{folland1999real}, Theorem 3.35) tells us that, on every interval $[c,d]$, $b_p$ is almost everywhere differentiable, and $b_p(\varepsilon)-b_p(c) = \int_c^\varepsilon b_p'(r)d\mu(r)$.

We now see how these assumptions allow us to notably simplify the derivations of the probability of success in both similarity and identification tests, while still not being too restrictive. 
In fact, most non-pathological cases of interest, like probability distributions with differentiable densities on manifolds, satisfy the assumption.

\begin{lemma}\label{lemma:only_one}
Let $X$ be the random variable of the correct answer to the $n$-item similarity or identification test $X=\mathrm{argmin}\sset{d(x_1,p),\dots,d(x_n,p)}$. If, for every $p\in M$, $b_p$ is absolutely continuous on every closed interval $[c,d]\subseteq[0,\infty)$, then $\Pro(|X|>1) = 0$.
\end{lemma}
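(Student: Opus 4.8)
The plan is to reduce the statement to a claim about pairwise ties among the distances $d(x_i,p)$, and then to exploit the atomlessness of the pushforward measures $S_p$ that follows from the absolute-continuity hypothesis.

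First I would observe that $X = \mathrm{argmin}\,\sset{d(x_1,p),\dots,d(x_n,p)}$ can have $|X|>1$ only if two distinct indices attain the common minimum, whence
\begin{equation}
\sset{|X|>1} \subseteq \bigcup_{1\le i<j\le n}\sset{d(x_i,p)=d(x_j,p)}.
\end{equation}
Since this is a finite union, a union bound reduces the problem to proving $\Pro(d(x_i,p)=d(x_j,p))=0$ for each fixed pair $i\neq j$. Next I would record the measure-theoretic input supplied by the discussion preceding the lemma: absolute continuity of $b_p$ forces $S_p\ll\mu$ (Lebesgue), so $S_p$ has no atoms, i.e. $S_p(\sset{t})=0$ for every $t\in[0,\infty)$. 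In particular, taking $t=0$ and using $S_p(\sset{0})=b_p(0)=\nu(\sset{p})$, every point of $M$ is $\nu$-null.

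For the similarity test, where the probe $p\sim\nu$ is drawn independently of $x_1,\dots,x_n$, I would condition on $p$. Given $p$, the distances $d(x_i,p)$ and $d(x_j,p)$ are i.i.d. with law $S_p$, so by Tonelli
\begin{equation}
\Pro\big(d(x_i,p)=d(x_j,p)\,\big|\,p\big) = \int_{[0,\infty)} S_p(\sset{t})\,dS_p(t) = 0,
\end{equation}
the integrand vanishing identically because $S_p$ is atomless; integrating over $p\sim\nu$ gives $\Pro(d(x_i,p)=d(x_j,p))=0$. For the identification test, where $p=x_j$ for the probed stimulus, the minimum distance is $0$ and is attained at index $j$, so a tie can occur only if some other stimulus coincides with the probe; conditioning on $x_j$ yields $\Pro(d(x_i,x_j)=0\mid x_j)=\nu(\sset{x_j})=0$, and integrating again gives probability zero. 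Summing the finitely many pairwise contributions then establishes $\Pro(|X|>1)=0$.

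The only genuine subtlety—hence the step I would treat most carefully—is the identification case, in which the probe is not independent of the inputs: there the argument must route through atomlessness of $\nu$ itself (equivalently $b_p(0)=0$) rather than through the joint law of two independent distances. Everything else is a routine union bound together with an application of Tonelli, the atomlessness of $S_p$ being handed to us directly by the absolute-continuity assumption.
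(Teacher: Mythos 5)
Your proof is correct and follows essentially the same route as the paper's: reduce $\{|X|>1\}$ to pairwise ties $d(x_i,p)=d(x_j,p)$ and kill each tie using the atomlessness of $S_p$ supplied by the absolute-continuity hypothesis. If anything, your version is slightly more careful at two points where the paper is loose—you disintegrate the pairwise tie probability correctly as $\int S_p(\{t\})\,dS_p(t)$ via Tonelli (the paper writes this integral against $d\mu$ instead), and you handle the identification case, where the probe is not independent of the inputs, explicitly through the atomlessness of $\nu$ rather than leaving it implicit.
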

\begin{proof}
Let $|X|$ be the cardinality of the set $X$.
\begin{align}
\Pro(|X|>1) &= \sum_{k=2}^n \binom{n}{k}\Pro(d(x_1,p)=\dots= d(x_k,p),d(x_{k+1},p)>d(x_1,p),\dots,d(x_n,p)>d(x_1,p)) \\
&\leq\sum_{k=2}^n\binom{n}{k} \Pro(d(x_1,p)=\dots =d(x_k,p)) \leq \sum_{k=2}^n\binom{n}{k} \Pro(d(x_1,p)= d(x_2,p))  \\
&=\sum_{k=2}^n \binom{n}{k} \int_{0}^\infty \Pro(d(x_1,p)=d(x_2,p)=r) d\mu(r) \leq \sum_{k=2}^n \binom{n}{k} \int_{0}^\infty \Pro(d(x_1,p)=r)d\mu(r)\\
&= \sum_{k=2}^n \binom{n}{k} \int_{0}^\infty S_p(\sset{r})d\mu(r) = 0,
\end{align}
where the last equality comes from the absolute continuity of $S_p$ w.r.t. $\mu$, as $\mu(\sset{r})=0$.
\end{proof}
This result tells us that, under the assumptions, there is a probability of $0$ that there are multiple correct answers to the similarity and identification tests.
Therefore, in the following derivations we will always only have to deal with the case $|X|=1$.

}{
\refstepcounter{section}
\refstepcounter{subsection}
  \phantomsection
\label{section:techinical}
}

\ifthenelse{\boolean{includeappendix}}{
\subsection{Proof of \Cref{theo:2-item}}\label{appendix:proof_1}
\subsubsection{Similarity test}\label{proof:sim2}

\begin{proof}
Let us derive the probability of succeeding in the similarity test in the case of 2 items. 
The following proof will be a subcase of the more general one for $n$ items but, given its complexity, it is useful to analyze this subcase separately.

Let $x_1,x_2,p$ be sampled independently from $M$ according to the probability measure $\nu$.
Let $X(x_1,x_2,p) = \underset{i\in\sset{1,2}}{\mathrm{argmin}}\ d(x_i,p)$.
Notice that $X$ can have three different values
\[
\begin{cases}
    X(x_1,x_2,p) = \sset{1} & \text{ if } d(x_1,p)< d(x_2,p) \\
    X(x_1,x_2,p) = \sset{2} & \text{ if } d(x_2,p)< d(x_1,p) \\
    X(x_1,x_2,p) = \sset{1,2} & \text{ if } d(x_1,p) = d(x_2,p) .
\end{cases}
\]
In this last case, when $x_1,x_2$ are equidistant from $p$, any answer to the test will be correct.
By \Cref{lemma:only_one}, we only need to focus on the first two as the probability that more than one answer is correct is $0$.

We have that
\begin{equation}\label{eq:prob1}
    \Pro(Y = X) = \sum_{i=1}^2  \Pro(Y= i|X =i)\Pro(X = i).
\end{equation}
Let us now rewrite the probability $\Pro(Y =i|X=i)$ by conditioning over all possible results of the samplings of $x_1,x_2$ and the probe $p$.

For this, given the independence assumption, we assume that the event $(x_1,x_2,p)$ is an element of the measure space $(M^3,\nu^{\otimes3})$ equipped with the standard product measure.
\[
\Pro(Y= i|X=i) = \int_{M^3} \Pro(Y=i|X=i,(x_1,x_2,p))dP(x_1,x_2,p|X=i),
\]
where $dP(x_1,x_2,p|X=i)$ is the conditional measure of the sampling of $x_1,x_2,p$ given the event that $X=i$ which, by Bayes theorem, can be rewritten as
\[
dP(x_1,x_2,p|X=i) = \frac{\ones[X(x_1,x_2,p)=i]d\nu(x_1) d\nu(x_2) d\nu(p)}{\Pro(X=i)},
\]
where $\ones[X(x_1,x_2,p)=i]$ coincides with the conditional law of the (deterministic) variable $X|(x_1,x_2,p)$.

Replacing this in \Cref{eq:prob1} we get
\begin{equation}\label{eq:proof_sim_prob1}
\Pro(X=Y) = \sum_{i=1}^2 \int_{M^3} \Pro(Y=i|X = i, (x_1,x_2,p)) \ones[X(x_1,x_2,p)=i]d\nu(x_1) d\nu(x_2) d\nu(p)
\end{equation}

The independence of the samplings of $x_1$ and $x_2$ means that all indices are equally likely to be the correct answer $\Pro(Y=i|X=i)=\Pro(Y=j|X=j)\ \forall i,j\in\sset{1,2}$.

\begin{align}   
\Pro(X=Y) &= 2 \int_{M^3} \Pro(Y=1,X=1,(x_1,x_2,p))\ones[X(x_1,x_2,p)=1]d\nu(x_1)d\nu(x_2)d\nu(p)\\
&= 2 \int_M\int_M \int_{x_2\in M:d(x_2,p)> d(x_1,p)} \ \frac{g(x_1,p)}{g(x_1,p) + g(x_2,p)} d\nu(x_2) d\nu(x_1) d\nu(p) \label{eq:6} 
\end{align}

Given the fact that we are considering constant similarity functions $g(x,y) = g_{\varepsilon;0}(x,y)$ which depend only on the distance between $x$ and $y$, $g(x,y)=g(d(x,y))$, we perform the following change of coordinates $d(x_1,p)\mapsto r_1$, $d(x_2,p)\mapsto r_2$,
\begin{equation}\label{eq:proof_prob_distances}   
\Pro(X=Y) = 2\int_M d\nu(p)\int_{[0,\infty]}dS_p(r_1)\int_{(r_1,\infty]} dS_p(r_2) \frac{g(r_1)}{g(r_1) + g(r_2)},
\end{equation}
where $S_p$ is the pushforward measure induced by the distance function from the probe $p$.

We decompose \Cref{eq:proof_prob_distances} into two cases: \textbf{a.} when $r_1 > \varepsilon$ and thus both items fall outside the resolution region of the probe $p$, and \textbf{b.} when $r_1 \leq \varepsilon$ and thus the closest item falls inside.

\textbf{a.}
In the first case, given that $r_2>r_1$, we will have that both $x_1$ and $x_2$ are too far from the probe to be recognized as similar, resulting in both numerator and denominator in \Cref{eq:proof_prob_distances} to be 0. 
Here we adopt the convention $0/(0+0) = 1/2$ to describe the model being maximally uncertain in its decision.
\[
2\int_M d\nu(p) \int_{(\varepsilon,\infty]} dS_p(r_1) \int_{(r_1,\infty]}\frac{1}{2}dS_p(r_2)  = \int_M d\nu(p) \int_{(\varepsilon,\infty]}\int_{(r_1,\infty]} dS_p(r_1)dS_p(r_2)
\]

To compute this integral, we leverage the almost-everywhere differentiability of $b_p$ and apply the fundamental theorem of calculus
\begin{align} 
&\int_{(\varepsilon,\infty]}\int_{(r_1,\infty]} dS_p(r_1)dS_p(r_2) = \int_{(\varepsilon,+\infty]}(1-b_p(r_1)) dS_p(r_1) \\
&=\int_{(\varepsilon,+\infty]}(1-b_p(r_1)) b'_p(r_1) d\mu(r_1) =  \left[-\frac{(1-b_p(r_1))^2}{2}\right]_\varepsilon^\infty = \frac{(1-b_p(\varepsilon))^2}{2}. \label{eq:case_a}
\end{align}

\textbf{b.} 
When $r_1\leq\varepsilon$ the first item will be considered to be similar to the probe $g(r_1) =1$, while the second can be both similar and dissimilar.
\begin{align}
&2\int_M d\nu(p) \int_{[0,\varepsilon]} dS_p(r_1)\int_{(r_1,\infty]} dS_p(r_2) \frac{1}{1+g(r_2)} \\
&= \underbrace{2\int_M d\nu(p) \int_{[0,\varepsilon]} dS_p(r_1)\int_{(r_1,\varepsilon]} \frac{1}{2} dS_p(r_2)}_{\textbf{I}}  + \underbrace{2\int_M d\nu(p) \int_{[0,\varepsilon]} dS_p(r_1)\int_{(\varepsilon,\infty]} dS_p(r_2)}_{\textbf{II}}.
\end{align}
The term \textbf{I}, just like above, can be computed in the following way
\begin{align}
&\int_M d\nu(p)\int_{[0,\varepsilon]} dS_p(r_1)\int_{(r_1,\varepsilon]}dS_p(r_2) \\
&=  \int d\nu(p)\int_{[0,\varepsilon]} (b_p(\varepsilon)-b_p(r_1))b'_p(r_1)d\mu(r_1) \\
&= \int_M \frac{b_p(\varepsilon)^2}{2} d\nu(p) \label{eq:all_in}
\end{align}
The term \textbf{II} is simply given by $2\int_M b_p(\varepsilon)(1-b_p(\varepsilon))d\nu(p)$.

Summing together \textbf{a.} and \textbf{b.} we arrive at the following:
\begin{align}
\Pro(Y= X) &= \int_M  \frac{1}{2}(1-b_p(\varepsilon))^2 + \frac{1}{2}b_p(\varepsilon)^2+2b_p(\varepsilon)(1-b_p(\varepsilon))d\nu(p)\\
&= \int_M \frac{1}{2} + b_p(\varepsilon) - b_p(\varepsilon)^2 d\nu(p).
\end{align}
We obtain the formula for the probability of succeeding the similarity test:
\begin{equation}\label{eq:sim}
    \Pro(Y= X)=\frac{1}{2} + \langle b(\varepsilon)\rangle - \langle b(\varepsilon)^2\rangle.
\end{equation}
\end{proof}

\subsubsection{Identification test}\label{proof:ide2}
\begin{proof}
The identification test can be seen as a subset of the similarity test, in which the probe is uniformly picked among the input stimuli.
This means that the correct response will be $X(x_1,x_2,p) = \sset{i\in\sset{1,2}:x_i = p}$ and both answers will be correct only in the case that $x_1 = x_2$.

Retracing the first steps outlined in \Cref{proof:sim2}, we find that the probability of the model being correct will be
\begin{align}\label{eq:proof_ide_1}
\Pro(Y=X) &=  2\int_{M} \sum_{p\in \sset{x_1,x_2}}\frac{
1}{2} \Pro(Y = 1|X=1,(x_1,x_2,p))\ones[X(x_1,x_2,p)=1]d\nu(x_1)d\nu(x_2)\\
&=  \int_{M^2} \frac{g(x_1,x_1)}{g(x_1,x_1)+g(x_2,x_1)} d\nu(x_1)d\nu(x_1).
\end{align}
Note that we used the fact that $p=x_1$ with probability $1/2$ and $p=x_2$ with probability $1/2$.
Given that $g(x,x)=1$ and, by the definition of metric space, $d(x,y)=0 \iff x=y$, we change coordinates $d(x_1,x_2) \mapsto r$ and rewrite \Cref{eq:proof_ide_1} as
\begin{align}\label{eq:start_ide}
\Pro(X=Y) &= \int_M d\nu(x_1)\int_{(0,\infty]} \frac{1}{1+g(r)}dS_{x_1}(r).
\end{align}
When $r>\varepsilon$, the second item does not interfere with the probe $p = x_1$ and the model will choose $x_1$ with certainty, while, if $r\leq \varepsilon$, $g(r) = 1$ and it will instead be maximally uncertain.
\begin{align}
&\int_M d\nu(x_1)\int_{(0,\infty]} \frac{1}{1+g(r)}dS_{x_1}(r) = \int_M d\nu(x_1) \int_{(0,\varepsilon]}\frac{1}{2}dS_{x_1}(r) + \int d\nu(x_1) \int_{(\varepsilon,\infty] }1dS_{x_1}(r).\\
&= \frac{1}{2}\int_M b_{x_1}(\varepsilon) d\nu(x_1) + \int_M (1-b_{x_1}(\varepsilon)) d\nu(x_1) = 1 - \frac{1}{2} \langle b(\varepsilon)\rangle.
\end{align}
\end{proof}
}{
\refstepcounter{subsection}
  \phantomsection
\label{appendix:proof_1}
}

\ifthenelse{\boolean{includeappendix}}{
\subsection{Proof of \Cref{theo:noise}}\label{proof:noise}
The proof proceeds by retracting the proof of the noiseless case with some adjusted constants.

We start from the similarity test success probability, as rewritten in \Cref{eq:proof_prob_distances}.
Once again, the integral can be decomposed into two cases: \textbf{a.} when $r_1>\varepsilon$ and thus both items fall outside the resolution region of the probe $p$ and \textbf{b.} when $r_1\leq \varepsilon$ and thus the closest item falls inside.

\textbf{a.} When $r_2>r_1>\varepsilon$, we have that $g(r_1)=g(r_2)=\Delta$ and thus the ratio $g(r_1)/(g(r_1)+g(r_2)) = 1/2$ resulting in the same term of \Cref{eq:case_a} $(1-b_p(\varepsilon))^2/2$.

\textbf{b.} When $r_1 \leq \varepsilon$ and $r_2 \leq \varepsilon$ both items are similar to the probe and thus we get the same contribution of the term $b_p(\varepsilon)^2/2$ in \Cref{eq:all_in}.

The only difference from the proof of the noiseless case is when $r_1\leq\varepsilon$ and $r_2 > \varepsilon$. 
In this case, the first item is similar to the probe while the second is not, but the noise erodes the probability of the model picking the first item. 
Therefore we get the following contribution to $\Pro(Y=X)$.
\[
2\int_M d\nu(p)\int_{[0,\varepsilon]} dS_p(r_1)\int_{(\varepsilon,\infty]} dS_p(r_2) \frac{1}{1+\Delta} = \frac{2}{1+\Delta}\int_M b_p(\varepsilon)(1-b_p(\varepsilon))d\nu(p).
\]
Putting all the terms together we get
\begin{align}
\Pro(Y=X) &=\int_M \frac{1}{2}(1-b_p(\varepsilon))^2 + \frac{1}{2}b_p(\varepsilon)^2 +\frac{2}{1+\Delta} b_p(\varepsilon)(1-b_p(\varepsilon))d\nu(p) \\
&= \int_M \frac{1}{2} + \left(\frac{2}{1+\Delta}-1\right)(b_p(\varepsilon)-b_p(\varepsilon)^2) d\nu(p) \\
&= \frac{1}{2} + \frac{1-\Delta}{1+\Delta}(\langle b_p(\varepsilon)\rangle - \langle b_p(\varepsilon)^2\rangle) 
\end{align}

For the identification test, we start from \Cref{eq:start_ide}. 
Now, when $r>\varepsilon$, the second item is outside of the resolution region of $x_1$ but the noise will still make the model's decision not certain.
\begin{align}
&\Pro(Y=X) =\int_M d\nu(x_1)\int_{(0,\infty]} \frac{1}{1+g(r)}dS_{x_1}(r) \\
&= \int_M d\nu(x_1) \int_{(0,\varepsilon]}\frac{1}{2}dS_{x_1}(r) + \int d\nu(x_1) \int_{(\varepsilon,\infty] }\frac{1}{1+\Delta}dS_{x_1}(r).\\
&= \frac{1}{2}\int_M b_{x_1}(\varepsilon) d\nu(x_1) + \frac{1}{1+\Delta}\int_M (1-b_{x_1}(\varepsilon)) d\nu(x_1) = \frac{1}{1+\Delta} - \frac{1-\Delta}{2(1+\Delta)} \langle b(\varepsilon)\rangle.  
\end{align}
}{
\refstepcounter{subsection}
  \phantomsection
\label{proof:noise}
}

\ifthenelse{\boolean{includeappendix}}{
\subsection{Proof of \Cref{theo:n-item}}\label{appendix:proof_3}

\begin{lemma}\label{lemma:comb_identitiy}
\[
\sum_{j=1}^n \binom{n}{j}\frac{1}{j} x^j (1-x)^{n-j} = \sum_{j=1}^n \frac{(1-x)^{n-j}-(1-x)^n}{j}.
\]
\end{lemma}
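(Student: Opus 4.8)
The plan is to convert the reciprocal weights $1/j$ into an integral, which lets the binomial sum collapse. Writing $\frac{1}{j} = \int_0^1 t^{j-1}\,dt$ for each $j\ge 1$ and exchanging the finite sum with the integral, the left-hand side becomes
\[
\int_0^1 \frac{1}{t}\sum_{j=1}^n \binom{n}{j}(xt)^j(1-x)^{n-j}\,dt .
\]
The inner sum is the binomial expansion of $(xt + (1-x))^n$ with the $j=0$ term removed, so it equals $(xt+1-x)^n - (1-x)^n$, giving
\[
\text{LHS} = \int_0^1 \frac{(xt+1-x)^n - (1-x)^n}{t}\,dt .
\]

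Next I would substitute $u = xt + (1-x)$, which maps $t\in[0,1]$ to $u\in[1-x,1]$ and satisfies $\frac{dt}{t} = \frac{du}{u-(1-x)}$, turning the integral into
\[
\int_{1-x}^{1} \frac{u^n - (1-x)^n}{u-(1-x)}\,du .
\]
Setting $a = 1-x$ and invoking the elementary factorization $\frac{u^n - a^n}{u-a} = \sum_{k=0}^{n-1} u^k a^{n-1-k}$, I integrate term by term over $[a,1]$ to obtain $\sum_{k=0}^{n-1} a^{n-1-k}\frac{1-a^{k+1}}{k+1}$. Re-indexing with $j = k+1$ yields $\sum_{j=1}^n a^{n-j}\frac{1-a^j}{j} = \sum_{j=1}^n \frac{a^{n-j}-a^n}{j}$, which is exactly the right-hand side once $a = 1-x$ is restored.

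The individual steps are routine; the only point needing care is the $u=xt+(1-x)$ substitution, where the integrand has an apparent $0/0$ singularity at $t=0$ (equivalently $u = 1-x$). This is harmless because the numerator $u^n-(1-x)^n$ vanishes there, and in fact $\frac{u^n-a^n}{u-a}$ is literally the polynomial $\sum_{k=0}^{n-1}u^k a^{n-1-k}$, so no genuine singularity is ever present and all manipulations are legitimate. I expect this substitution-and-factorization step to be the main (though mild) obstacle; everything else is bookkeeping. As a sanity check one can verify the base case $n=1$, where both sides reduce to $x$. An induction on $n$ is possible as an alternative, but the binomial coefficients make the inductive step awkward, so the integral route is preferable.
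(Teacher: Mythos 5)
Your proof is correct, and it takes a genuinely different route from the paper. The paper proves the identity by showing that the two sequences (indexed by $n$) have the same generating function in an auxiliary variable $z$: it computes both $\sum_n f_n z^n$ and $\sum_n g_n z^n$ in closed form, using the negative-binomial generating function $\sum_k \binom{j+k}{j} w^k = (1-w)^{-j-1}$ and the power series of $\log(1-\,\cdot\,)$, and finds that both equal $-\log\!\left(\frac{1-z}{1-z+xz}\right)/(1-z+xz)$. Your argument instead fixes $n$ and converts the weights $1/j$ into $\int_0^1 t^{j-1}\,dt$, collapses the binomial sum to $(xt+1-x)^n-(1-x)^n$, and finishes with the substitution $u=xt+1-x$ and the factorization $\frac{u^n-a^n}{u-a}=\sum_{k=0}^{n-1}u^k a^{n-1-k}$. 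Your route is more elementary and self-contained: it needs only the binomial theorem and term-by-term integration of polynomials, avoids any formal power series or convergence considerations, and produces the identity for each $n$ directly rather than by matching coefficients of an entire generating function. The paper's generating-function approach is arguably more mechanical once one is fluent with the standard identities, and it packages the whole family of identities into a single closed-form object, but for the purpose at hand (a single polynomial identity used once in the proof of the $n$-item theorem) your integral argument is cleaner. Your handling of the apparent $0/0$ at $t=0$ is also correct: the quotient is literally a polynomial in $u$, so there is no singularity to worry about.
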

\begin{proof}
Let us call $f_n$ the left-hand side of the identity and $g_n$ the right-hand side. 
We prove the result by showing that the generating functions of the series $(f_n)_n,(g_n)_n$ are equal.

Let us start with $(f_n)_n$.
\begin{align}
F(z)&=\sum_{n=0}^\infty f_n z^n = \sum_{n=0}^\infty\sum_{j=1}^n \binom{n}{j}\frac{1}{j} x^j (1-x)^{n-j} z^n = \sum_{j=1}^\infty \sum_{n=j}^\infty \binom{n}{j}\frac{1}{j} x^j (1-x)^{n-j}z^n\\
&= \sum_{j=1}^\infty \frac{x^j}{j} \left(\sum_{n=j}^\infty \binom{n}{j} (1-x)^{n-j} z^n\right) = \sum_{j=1}^\infty \frac{x^j}{j} \left(\sum_{k=0}^\infty \binom{j+k}{j} (1-x)^{k} z^{k+j}\right)\\ &=\sum_{j=1}^\infty \frac{x^j}{j} z^j(1-z+xz)^{-j-1} = \frac{1}{(1-z+xz)}\sum_{j=1}^\infty \frac{1}{j}\left(\frac{xz}{1-z+xz}\right)^j \\
&= - \frac{\log(1-\frac{xz}{1-z+xz})}{1-z+xz} = -\frac{\log(\frac{1-z}{1-z+xz})}{1-z+xz},
\end{align}
where we used the generating function identity for the binomial, see \citet{graham89_concrete} (Equation 5.56) and the power series expansion of $\log(1-x)$.

Let us proceed in the same way for $g_n$:
\begin{align}
G(z) &= \sum_{n=0}^\infty g_n z^n = \sum_{n=0}^\infty \sum_{j=1}^n \frac{1}{j}((1-x)^{n-j} -(1-x)^n)z^n \\
&= \sum_{j=1}^\infty \frac{1}{j}\left(\sum_{n=j}^\infty(1-x)^{n-j} z^n - \sum_{n=j}^\infty (1-x)^n z^n\right) \\
&= \sum_{j=1}^\infty \frac{1}{j}\left( (1-x)^{-j} \sum_{n=j}^\infty (z-zx)^n - \sum_{n=j}^\infty(z-zx)^n\right)\\
&=\sum_{j=1}^\infty \frac{1}{j}\left( (1-x)^{-j}  \frac{(z-zx)^j}{1-z+zx} - \frac{(z-zx)^j}{1-z+zx}\right)\\
&=\frac{1}{1-z+zx} \left(\sum_{j=1}^\infty \frac{1}{j} \left(\frac{z-zx}{1-x}\right)^j - \sum_{j=1}^\infty\frac{1}{j}(z-zx)^j \right) \\
&= \frac{-\log\left(1-\frac{z-zx}{1-x}\right) + \log(1-z+zx)}{1-z+xz} = - \frac{\log\left(\frac{1-z}{1-z+xz}\right)}{1-z+xz}
\end{align}

\end{proof}
\subsubsection{Similarity test}\label{proof:n_sim}
Recall that in the $n$-item similarity test, we are sampling independently $n$ stimuli $x_1,\dots,x_n$ and a probe $p$ and we ask the model to find which among the stimuli is the closest to $p$.
Recall that \Cref{lemma:only_one} tells us that the probability of having more than a correct answer is 0.

Retracing the first steps in \Cref{appendix:proof_1}, we find that
\begin{align}
\Pro(Y=X) &= \sum_{i=1}^n \Pro(Y=i|X=i)\Pro(X=i)\\
\end{align}
By the symmetry induced by the independence of the sampling, we see that $\Pro(Y=i|X=i) = \Pro(Y=j|X=j) = \Pro(Y=1|X=1)\ \forall i,j$ and $P(X=i)=P(X=1)\ \forall i$, and thus we can restrict to the case when the closest stimulus is the first one.
\begin{align}
&\Pro(Y=1|X=1) = \int_{M^{n}\times M} \Pro(Y=1|X=1,(x_1,\dots,x_n,p)) dP(x_1,\dots,x_n,p| X=1)\\
 &=\int_{M^{n}\times M} \Pro(Y=1|X=1,(x_1,\dots,x_n,p))\frac{\ones[X(x_1,\dots,x_n,p)=1]}{\Pro(X=1)}d\nu(x_1)\cdots d\nu(x_n)d\nu(p).
\end{align}
\begin{align}
&\Pro(Y=X) \\
&= n \int_{M^n\times M} \Pro(Y= 1|X=1,(x_1,\dots,x_n,p))\ones[X(x_1,\dots,x_n,p)=1]d\nu(x_1)\cdots d\nu(x_n)d\nu(p) \\
&=n\int_{M}d\nu(p)\int_{M}d\nu(x_1)\int_{d(x_2,p)>d(x_1,p)} \cdots \int_{d(x_n,p)>d(x_1,p)}  \frac{g(x_1,p)}{\sum_{i=1}^n g(x_i,p)} d\nu(x_2)\cdots d\nu(x_n).
\end{align}
Given that $g$ is a constant similarity function $g(x,y)=g_{\varepsilon;0}(x,y)$ which depends only on the distance between $x$ and $y$, we perform the change of coordinates $d(x_i,p) \mapsto r_i$ with $S_p$ being the pushforward measure induced by the distance function from the probe $p$.
\begin{align}
\Pro(X=Y) =n\int_M d\nu(p) \int_{[0,\infty]} dS_p(r_1)\int_{(r_1,\infty]} dS_p(r_2)\cdots\int_{(r_1,\infty]}dS_p(r_n) \frac{g(r_1)}{\sum_{i=1}^n g(r_i)}.
\end{align}
We now consider two cases separately. 
\textbf{a.} If $r_1 > \varepsilon$, no item falls close enough to the probe and thus $g(x_i,p)=0\ \forall i=1,\dots,n$ and the model's response is random:
\begin{align}
&n\int_M d\nu(p) \int_{(\varepsilon,\infty]}dS_p(r_1)\int_{(r_1,\infty]} dS_p(r_2)\cdots\int_{(r_1,\infty]}dS_p(r_n) \frac{1}{n} \\\
&= \int_M d\nu(p) \int_{(\varepsilon,\infty]} (1-b_p(r_1))^{n-1} dS_p(r_1) = \int_M d\nu(p) \int_{(\varepsilon,\infty]} (1-b_p(r_1))^{n-1} b'_p(r_1) d\mu(r_1)
\end{align}
Notice now that $b_p$ absolutely continuous implies that $(1-b_p(r_1))^{n-1}$ is absolutely continuous and, by Lebesgue's theorem, it is differentiable almost everywhere and the fundamental theorem of calculus holds (see \Cref{section:techinical}). 
We thus deduce that
\begin{align}
\int_M d\nu(p) \int_{(\varepsilon,\infty]} (1-b_p(r_1))^{n-1} b'_p(r_1) d\mu &= \int_M d\nu(p) \left[-\frac{(1-b_p(r_1))^n}{n}\right]_\varepsilon
^\infty \\
&= \int_M \frac{(1-b_p(\varepsilon))^n}{n} d\nu(p). \label{eq:n_item_proof_2}
\end{align}

\textbf{b.} If $r_1\leq\varepsilon$, then the closest stimulus is similar to the probe $g(r_1)=1$ and we write
\begin{align}\label{eq:n-item_proof_1}
n\int_Md\nu(p)\int_{[0,\varepsilon]}dS_p(r_1)\int_{(r_1,\infty]}dS_p(r_2)\cdots\int_{(r_1,\infty]}dS_p(r_n) \frac{1}{1+\sum_{i=2}^n g(r_i)}.
\end{align}
Each item $i>1$ can fall either inside of $B_\varepsilon(p)$ and contribute to the denominator of the decision function, or fall outside.
Given that the denominator only depends on the \emph{number} of stimuli which fall in $B_\varepsilon(p)$ and not on their index, we can write \Cref{eq:n-item_proof_1} as
\begin{align}
&n\int_M d\nu(p) \int_{[0,\varepsilon]} dS_p(r_1)\sum_{k=0}^{n- 1}\binom{n-1}{k}(b_p(\varepsilon )-b_p(r_1))^k(1-b_p(\varepsilon))^{n-1-k}\frac{1}{k+1} \\
&= n\int_M d\nu(p)\sum_{k=0}^{n-1}\binom{n-1}{k}(1-b_p(\varepsilon))^{n-1-k}\frac{1}{k+1}\int_{[0,\varepsilon]}(b_p(\varepsilon)-b_p(r_1))^k b_p'(r_1) d\mu(r_1) \\
&= n\int_M d\nu(p)\sum_{k=0}^{n-1}\binom{n-1}{k}(1-b_p(\varepsilon))^{n-1-k}\frac{1}{k+1} \frac{b_p(\varepsilon)^{k+1}}{k+1}\\
&= n\int_M d\nu(p)\sum_{k=0}^{n-1}\binom{n-1}{k}\frac{1}{(k+1)^2}(1-b_p(\varepsilon))^{n-1-k} b_p(\varepsilon)^{k+1}\\
&= \int_M d\nu(p)\sum_{j=1}^{n}\binom{n}{j}\frac{1}{j}(1-b_p(\varepsilon))^{n-j} b_p(\varepsilon)^{j},
\end{align}
where the last is performed by re-indexing $j=k+1$ and applying the property of the binomial coefficient $\binom{n-1}{j-1} = \frac{j}{n}\binom{n}{j}$.
Applying \Cref{lemma:comb_identitiy}, we rewrite the result in a more convenient form
\begin{equation}\label{eq:n_item_proof_3}
\int_M \sum_{k=1}^n \frac{(1-b_p(\varepsilon))^{n-k} - (1-b_p(\varepsilon))^n}{k} d\nu(p).
\end{equation}
and summing \Cref{eq:n_item_proof_2} with \Cref{eq:n_item_proof_3}, we get our final expression
\[
p^n_S(\varepsilon) = \mathbb{E}_{p\sim \nu}\left[\frac{1}{n} + \sum_{k=1}^{n-1} \frac{{}(1-b_p(\varepsilon))^{n-k} - (1-b_p(\varepsilon))^n}{k} \right].
\]

\subsubsection{Identification test}
Re-tracing the first steps of \Cref{proof:n_sim} and \Cref{proof:ide2}
\begin{align}
\Pro(X=Y)&=n\int_M\sum_{p\in\sset{x_1,\dots,x_n}} \frac{1}{n} \Pro(Y=1|X=1,(x_1,\dots,x_n,p))\ones[X(x_1,\dots,x_n,p)=1]d\nu(x_1)\cdots d\nu(x_n)\\
&= \int_M d\nu(x_1) \int_{M^{n-1}} \frac{g(x_1,x_1)}{g(x_1,x_1)+\sum_{i=2}^n g(x_i,x_1)}
 d\nu(x_2)\cdots d\nu(x_n) \\
&= \int_M d\nu(x_1) \int_{(0,\infty]} dS_p(r_2)\cdots \int_{(0,\infty]} dS_p(r_n) \frac{1}{1 + \sum_{i=2}^n g(r_i)}.
\end{align}
Just like we saw in the proof of the similarity test, here any stimulus different from the probe will contribute to the denominator of the decision function if and only if it falls in $B_\varepsilon(x_1)$. 
Moreover, the decision function depends only on the number of such stimuli and not on which ones contribute to the denominator.
Therefore, we can write
\begin{align}
\Pro(X=Y) &= \int_M d\nu(x_1)\sum_{k=0}^{n-1}\binom{n-1}{k} b_{x_1}(\varepsilon)^k(1-b_{x_1}(\varepsilon))^{n-1-k}\frac{1}{k+1}\\
&= \int_M d\nu(x_1) \frac{j}{n}\sum_{j=1}^n \frac{
1
}{j}\binom{n}{j} b_{x_1}(\varepsilon)^{j-1}(1-b_{x_1}(\varepsilon))^{n-j}\\
&= \mathbb{E}_{p\sim \nu}\left[\frac{1-(1-b_{p}(\varepsilon))^n}{n b_{p}(\varepsilon)}\right],
\end{align}
where we used the property of the binomial coefficient $\binom{n-1}{j-1} = \frac{j}{n}\binom{n}{j}$.
}{
\refstepcounter{subsection}
  \phantomsection
\label{appendix:proof_3}
}

\ifthenelse{\boolean{includeappendix}}{
\subsection{Proof of \Cref{prop:lineardecay}}\label{proof:lineardecay}
We want to compute $p_S$ and $p_I$ for the uniform measure on the flat circle $M=[0,1]$ with $d(x,y)=\min(|x-y|,1-|x-y|)$ for the linearly decaying similarity function with resolution $\varepsilon$, $g(r)=\sigma\left(1-\frac{r}{\varepsilon}\right)$, where $\sigma(x) = \max(x,0)$.

First, note that for the uniform measure, we have that
\[
b_x(\varepsilon)=\nu(B_\varepsilon(x)) = 
\begin{cases}
    2\varepsilon & \text{ if } \varepsilon
    \leq \frac{1}{2}\\
    1 & \text{ if } \varepsilon>\frac{1}{2}
\end{cases}=b(\varepsilon),
\]
i.e. the length of the interval $[-\varepsilon,\varepsilon]$ on the circle.
Accordingly, we have that the measure $S_x$ is such that
\[
S_x(E) = S(E)  \int_E b'(r) d\mu(r) = 2 \mu(E),
\]
if $E\subseteq [0,\frac{1}{2}]$.
\paragraph{Similarity test}
We start from \Cref{eq:proof_prob_distances}
and, once again, consider the different cases.
If $r_1,r_2 > \varepsilon$, there is no difference from the constant case: the probe has similarity $0$ with both $x_1$ and $x_2$, therefore the model is maximally uncertain. 
This term will contribute $(1-b(\varepsilon))^2/2$ to $\Pro(Y=X)$.

If $r_1\leq \varepsilon$ and $r_2>\varepsilon$, there is no difference from the constant case as the probe is similar to $x_1$ with no interference from $x_2$.
We get a contribution of $2b(\varepsilon)(1-b(\varepsilon))$.

If $r_1\leq\varepsilon,r_2\leq\varepsilon$, we need to compute
\begin{align}
&2\int_{[0,\varepsilon]} dS(r_1)\int_{(r_1,\varepsilon]} dS(r_2) \frac{g(r_1)}{g(r_1)+g(r_2)} = 8\int_{[0,\varepsilon]}\int_{(r_1,\varepsilon]} \frac{1-r_1/\varepsilon}{1-r_1/\varepsilon+1-r_2/\varepsilon}d\mu(r_1)d\mu(r_2) \\
&= 8 \int_{[0,\varepsilon]} (\varepsilon-r_1)\log(2) d\mu(r_2) = 8\cdot\frac{1}{2}\varepsilon^2\log(2)=(2\varepsilon)^2\log(2 )=\log(2)b(\varepsilon)^2.
\end{align}
Putting together the three contributions, we get
\[
\Pro(Y=X)= \frac{1}{2} -b(\varepsilon)+\frac{1}{2}b(\varepsilon)^2+2b(\varepsilon)-2b(\varepsilon)^2+\log(2)b(\varepsilon)^2= \frac{1}{2}+b(\varepsilon) -(3/2-\log(2))b(\varepsilon)^2.
\]
\paragraph{Identification test}
We start from \Cref{eq:start_ide} and consider two cases.
If $r>\varepsilon$, then $x_2$ does not interfere with the probe and thus the model will choose $x_1$ with probability $1$. 
Just like the constant case, we get a contribution of $1-b(\varepsilon)$.

If $r\leq \varepsilon$, we need to compute
\begin{align}
\int_{(0,\varepsilon]} \frac{1}{1+g(r)} dS(r)=\int_{(0,\varepsilon]} 2\frac{1}{1+1-r/\varepsilon} d\mu(r) = 2\log(2)\varepsilon = \log(2)b(\varepsilon).
\end{align}
In total, we get 
\[
\Pro(Y=X) = 1-b(\varepsilon) + \log(2)b(\varepsilon)=1-(1-\log(2))b(\varepsilon).
\]
}{
\refstepcounter{subsection}
  \phantomsection
\label{proof:lineardecay}
}

\ifthenelse{\boolean{includeappendix}}{
\subsection{Details on numerical experiments}\label{appendix:extra_numerics}
All the code used to produce the results can be found in \url{https://github.com/nplresearch/generalization}.
\subsubsection{Toy model}
\label{appendix:extra_numerics:toy-model}
The architecture of the toy model we used is the following linear bias-less autoencoder with a nonlinearity at the end
\[
f(x)=\sigma(W^\top W x),
\]
where $\sigma$ is the ReLU activation function $\sigma(x) = \max(x,0)$ and $W\in \mathbb{R}^{m\times l}$.

In both the pure-reconstruction and semantic experiments, the inputs were chosen to be $l=50$ one-hot vectors $x = e_i\ \forall i=1,\dots,m$.
The hidden space dimension was chosen to be $m=10$.

The pure reconstruction experiment is performed by minimizing the MSE loss between input one-hot and its reconstruction through the network
\[
L_{\mathrm{rec}}=\sum_{i=1}^l \norm{e_i - \sigma(W^\top W e_i)}^2 = \sum_{i=1}^l \norm{e_i - \sigma(W^\top w_i)}^2.
\]

In the semantic case, the loss is built in the following way.
Three different indices $i,j,k$ are picked randomly and their associated one-hots are built $e_i,e_j,e_k$.
Then, we compute the ratio of similarities 
\[
D_i = \frac{\sigma(w_i^\top w_k)}{\sigma(w_i^\top w_k)+\sigma(w_j^\top w_k)},\ D_j = \frac{\sigma(w_j^\top w_k)}{\sigma(w_i^\top w_k)+\sigma(w_j^\top w_k)}.
\]
The index $\hat{i}\in\sset{i,j}$ of the correct answer is computed by taking the minimum between $d(x_i,x_k)$ and $d(x_j,x_k)$, where the distance function is given as a training input in the form of a distance matrix.
The loss, finally, is computed by taking the Negative Log Likelihood Loss (NLL) between the distribution $(D_i,D_j)$ and the one hot vector encoding the correct response.
\[
L_{\mathrm{sim}} = -\frac{1}{2} D_{\hat{i}}.
\]

For all experiments, each epoch is made of 2000 samples, with batch size 128.
The models are trained for 500 epochs with the Adam optimizer, with learning rate 0.0007 and 0 weight decay.

Given that random vectors in high-dimensional space tend to be close to orthogonal, biasing the model towards high $p_I$, we initialize the weight matrix with i.i.d. uniform
in the interval $[0,2]$.

At each epoch, the model is evaluated by performing similarity and identification tests.
1,000 triplets $(i,j,k)$ ($k\in\sset{i,j}$ for the identification) are extracted, and the average $D_{\hat{i}}$ is recorded to obtain the values of $p_S$ and $p_I$ shown in \Cref{fig:neural_net}.
The average similarity functions shown in the figure's insets are obtained as $g_i(j)=\sigma(w_i^\top w_j)$ for every $j\in 1,\dots,l$.
Leveraging the symmetry of the circular structure, each vector $g_i$ is circularly shifted so that the index $i$ goes to the center of the circle $g_i \mapsto \tilde{g}_{i}$.
Finally, we take the average over $i$, $\tilde{g} = \frac{1}{l}\sum_{i=1}^l \tilde{g}_i$.

We show the distance matrices for the circle and line experiments, together with the full learned similarity matrices for a single run in \Cref{fig:t_details_1}.

In \Cref{fig:nn_diff_dims}, moreover, we see the results of the three different trainings for three values of the neural network's latent dimension.
As it increases, we see how the model is able to have less interference between representations, signified by $p_I$ being able to reach higher values.
Visualizing the average learned similarity functions and estimating the noise value, we are able in all cases to predict the maximum $p_I$ using \Cref{theo:noise}.

\begin{figure}
    \centering
    \includegraphics[width=0.95\linewidth]{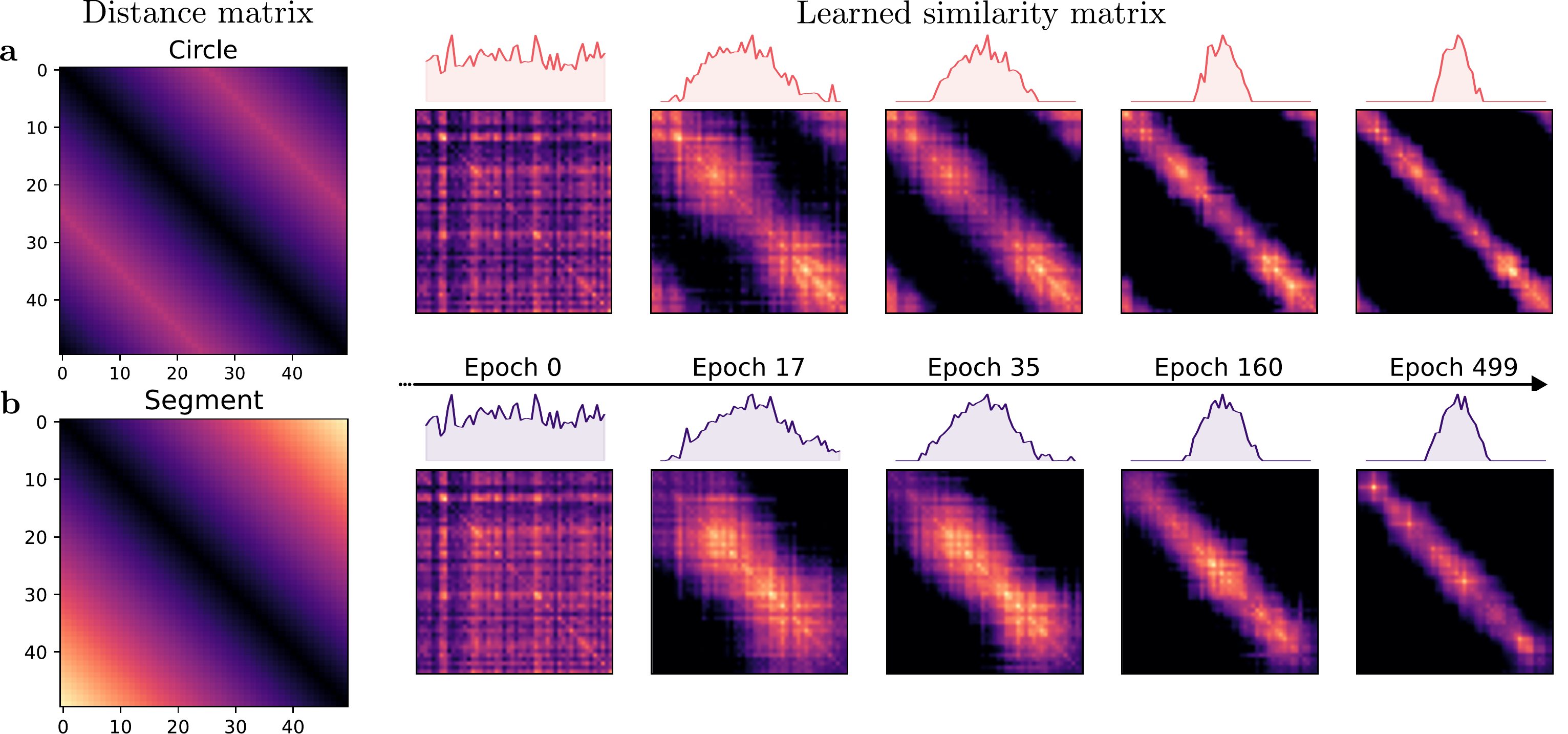}
    \caption{Visualization of the distance matrix (left) and the learned similarity matrices through training for the circle (top row) and the segment (bottom row).}
    \label{fig:t_details_1}
\end{figure}

\begin{figure}
    \centering
    \includegraphics[width=\linewidth]{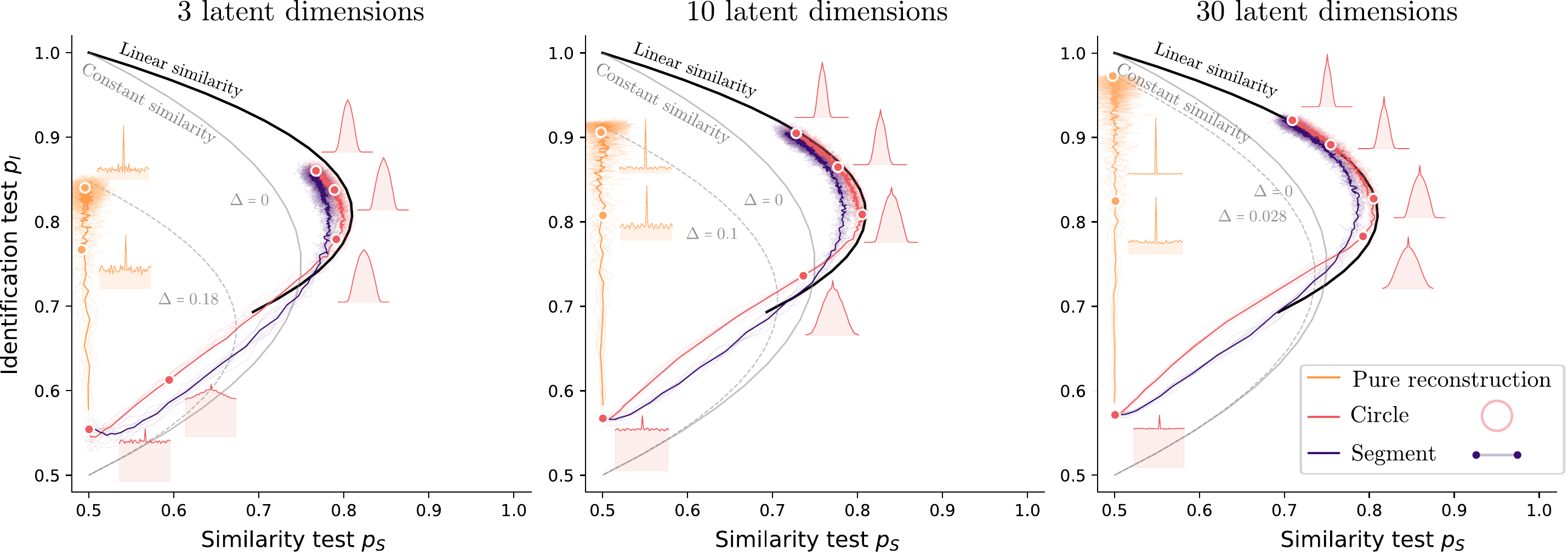}
    \caption{Different training trajectories of the toy model with different latent dimensions, visualized as in \Cref{fig:neural_net}.}
    \label{fig:nn_diff_dims}
\end{figure}
}{
\refstepcounter{subsection}
  \phantomsection
\label{appendix:extra_numerics}
}


\subsubsection{Convolutional Neural Network fine-tuned on evolutionary distances among bird species}
\label{appendix:extra_numerics:birds}
\paragraph{Experimental setup.} To test our theoretical predictions in a realistic computer vision setting, we fine-tuned a ResNet-50 model \citep{he2016deep} pre-trained on ImageNet.
We used the Caltech-UCSD Birds-200-2011 dataset \citep{wah2011caltech}, which contains 11,788 images of 200 bird species, paired with evolutionary distance data from the TimeTree database \citep{kumar2022timetree}.
The experimental design involved two tasks with a consistent triplet-based evaluation format:
\begin{itemize}
\item \textbf{Identification task:} Given images of two reference species ($x_1, x_2$) and a probe image, determine which reference species the probe belongs to.
\item \textbf{Similarity task:} Given images of two reference species ($x_1, x_2$) and a probe species ($p$), determine which reference species is evolutionarily closer to the probe.
\end{itemize}
Using a contrastive loss that encouraged embedding bird images closer to their evolutionary relatives, we fine-tuned the model using a composite loss:
\[
\mathcal{L} = (1-\alpha)\,\mathcal{L}_{\text{id}} + \alpha\,\mathcal{L}_{\text{sim}},
\]
where $\mathcal{L}_{\text{id}}$ is a cross-entropy loss for species identification, and $\mathcal{L}_{\text{sim}}$ aligns the embedding space with evolutionary distances.
The parameter $\alpha$ controls the balance between identification and generalization objectives.
During evaluation, we defined similarity using a threshold $\epsilon$ on feature distances, where distances below $\epsilon$ indicated similarity.
This allowed us to systematically study the generalization-identification tradeoff by varying both $\alpha$ and $\epsilon$.

\begin{figure}[h]
    \centering
    \includegraphics[width=\linewidth]{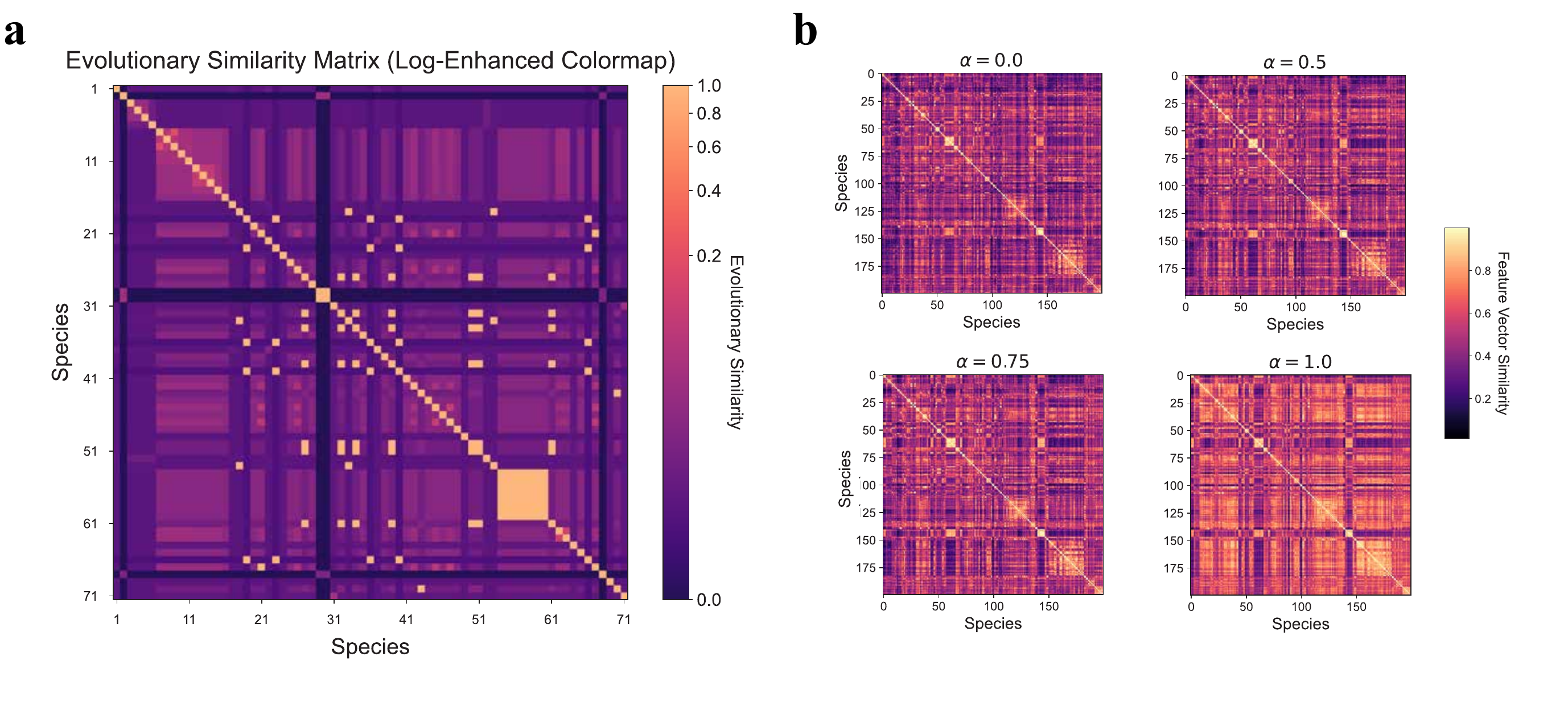}
    \caption{Evolutionary similarity between species obtained from (a) bird phylogeny and (b) the feature vector similarities as $\alpha$ is tuned.}
    \label{fig:cnn_loss}
\end{figure}

\paragraph{Training details.} We trained the model for 15 epochs using SGD with momentum 0.9, weight decay $1e-4$, and an initial learning rate of 0.001, reduced by a factor of 0.1 when validation performance plateaued. 
To handle GPU memory constraints, we used a batch size of 8 with gradient accumulation over 4 steps (effective batch size 32). 
We tested $\alpha$ values ranging from 0.0 to 1.0 with several random seeds (42-46) to ensure robust results. 

The birds dataset was split 64-16-20\% for training, validation, and testing, with an additional 15\% of species held out completely as out-of-distribution test data.
The evolutionary distance loss ($\mathcal{L}_{\text{sim}}$) was implemented by computing pairwise distances in feature space and aligning them with normalized evolutionary distances derived from the phylogenetic tree. 
This explicitly encouraged the CNN to map visual features into a space that preserved evolutionary relationships as shown in Figure \ref{fig:cnn_loss}.

\paragraph{Theoretical connections.} Our experimental framework directly maps to the theoretical constructs in Miller's Law. 
The identification task measures $p_I$ (probability of correct identification), while the similarity task measures $p_S$ (probability of correct similarity judgment). 
The threshold $\varepsilon$ corresponds to the resolution parameter in our theoretical framework, controlling the ball measure $b(\varepsilon)$ that determines which items are considered similar.

\paragraph{Evolution during training.} We monitored how the identification-generalization constraints evolved during training by tracking both scores across epochs. 
With $\alpha=0$ (pure identification objective), models rapidly optimized for identification at the expense of generalization.
As $\alpha$ increased, especially beyond 0.5, models traced distinct trajectories through $(p_s, p_I)$ (or G-I) space, with higher $\alpha$ values showing earlier and more pronounced shifts toward generalization.

The final equilibrium position in G-I space was primarily determined by $\alpha$, with higher $\alpha$ values reliably producing models with better generalization capabilities.
Out-of-distribution testing revealed that models with higher $\alpha$ values demonstrated substantially better generalization to unseen bird species, confirming that the similarity-based training objective promotes more robust feature learning that captures fundamental biological relationships rather than superficial correlations.

\begin{figure}[h]
    \centering
    \includegraphics[width=0.95\linewidth]{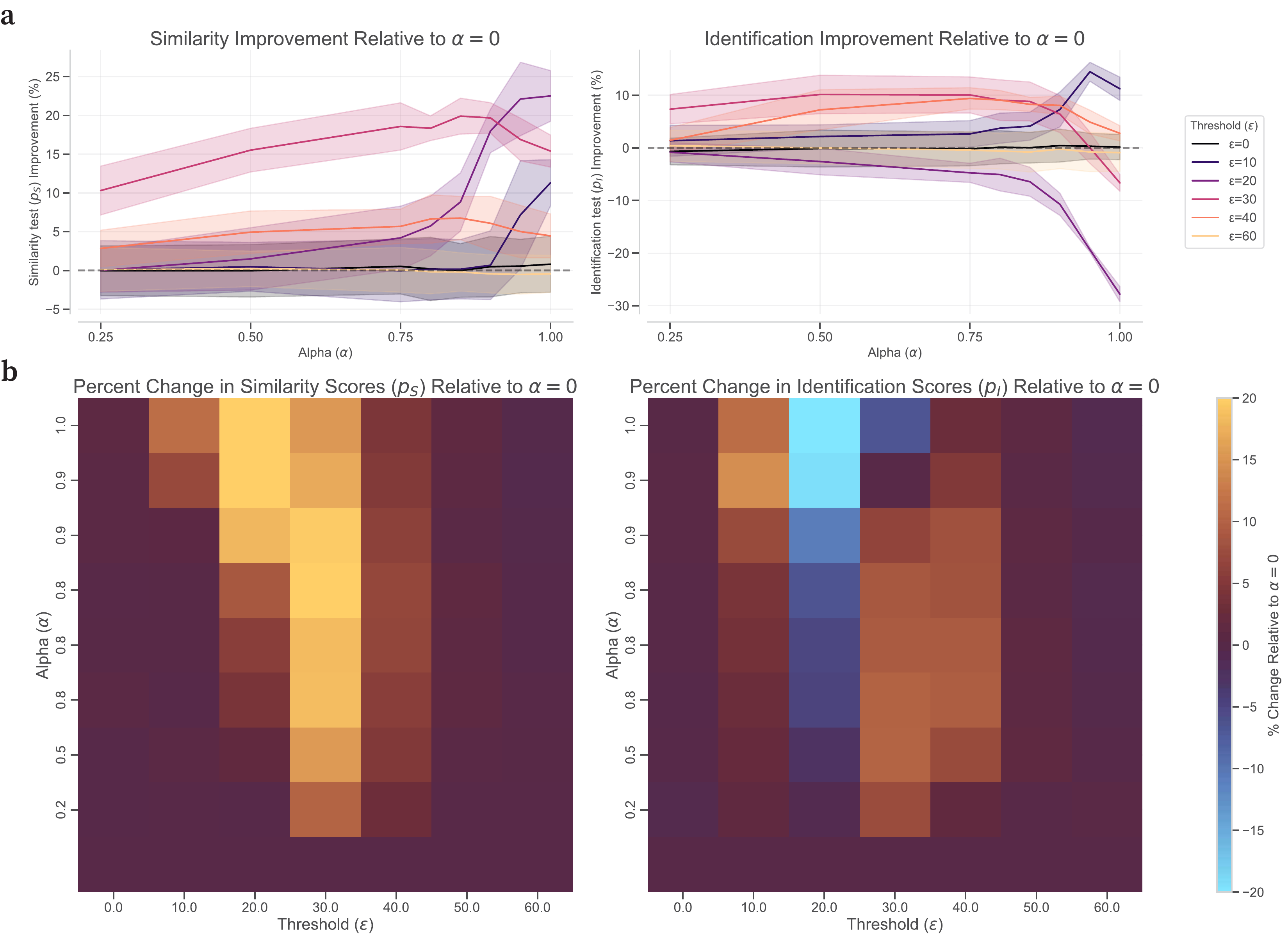}
    \caption{(a) Average generalization and identification performance for  different fixed  thresholds ($\varepsilon$), showing how threshold choice affects the generalization-identification trade-off.
    (b) results for parameter space swipe in both $(\varepsilon, \alpha)$, highlighting the presence of a narrow band of $\varepsilon$ values for which both generalization and identification show relative improvements with respect to $\alpha=0$.}
    \label{fig:cnn_improv}
\end{figure}

\paragraph{Results.} As shown in Figure~\ref{fig:experiments}a, the bird CNN exhibits a clear tradeoff between generalization and identification.
We expand these results in Figure \ref{fig:cnn_improv}, in which we show the performance improvement for generalization (a, left) and the the decrease (a, right) in identification relative to the baseline $\alpha=0$ (pure identification), as a function of $\alpha$ and a few fixed values of $\varepsilon$. 
Figure \ref{fig:cnn_improv}b provides a parameter scan, highlighting the presence of a critical scale that provides maximal improvements on both tasks. 
In fact, for low thresholds ($\varepsilon \leq 10$) and high thresholds ($\varepsilon \geq 50$), generalization performance remains similar across all $\alpha$ values, with minimal impact on identification.
For threshold $\varepsilon = 20$, we see generalization performance improvement at the cost of identification performance. 
For threshold $\varepsilon = 30$, we see score improvements for both tasks, which are then lost for larger $\varepsilon$ values. 
Indeed, for thresholds between ($20 < \varepsilon < 50$), we observe significant generalization improvements for higher $\alpha$ values, accompanied by corresponding identification performance decreases, especially in the $\alpha \geq 0.8$ range.
This confirms our theoretical prediction that increasing resolution (larger $\varepsilon$) shifts the balance toward better generalization at the expense of identification accuracy.

\subsubsection{LLMs performing date-of-birth identification vs similarity task}
\label{appendix:extra_numerics:llm}

\paragraph{Evidence of resolution}\label{sub:res_llm}
\paragraph{Experimental setup.} We investigated whether large language models exhibit semantic resolution when processing time information. 
We tested three models: gemma-2-2b-it \citep{team2024gemma}, Llama-3.2-3B-Instruct \citep{grattafiori2024llama}, Qwen2.5-7B-Instruct \citep{bai2023qwen} on the following task.
The models are fed the system prompt \texttt{"You are a useful chatbot assistant."} and are asked to respond to the prompt \texttt{"A was born in x. B was born in y. Who was born closest to p? Answer with a single name."}

The variables \texttt{A},\texttt{B},\texttt{x},\texttt{y} and \texttt{p} are generated in the following way:
\begin{enumerate}
    \item A central year \texttt{c} is sampled uniformly from the set of integers $\sset{1500,1501,\dots,1699}$;
    \item For each value $\delta x \in \sset{20,50,100,200}$, we fix \texttt{x} $=$ \texttt{c} $-\delta x$ and  \texttt{y} $=$ \texttt{c} $+\delta x$ with probability 0.5 and \texttt{x} $=$ \texttt{c} $+\delta x$ and  \texttt{y} $=$ \texttt{c} $-\delta x$ with probability 0.5.
    \item For each pair \texttt{x},\texttt{y} chosen in this way, we run the prompt with every $\texttt{p}=\texttt{c}+\delta \texttt{p}, \delta\texttt{p}\in\{\texttt{c}-300,\texttt{c}+300\}$.
    \item The prompt with each value of \texttt{p} is ran 20 times, randomizing the variables \texttt{A} and \texttt{B}, which are two different names sampled from the list \texttt{[
    "Alice", "Bob", "Charlie", "David", "Eve", "Frank", "Grace", "Heidi",
    "Ivan", "Judy", "Karl", "Liam", "Mallory", "Nina", "Oscar", "Peggy",
    "Quentin", "Rupert", "Sybil", "Trent", "Uma", "Victor", "Walter",
    "Xander", "Yvonne", "Zach", "Abigail", "Benjamin", "Catherine", "Daniel",
    "Elena", "Frederick", "Gabriella", "Henry", "Isabella", "Jack", "Katherine",
    "Lucas", "Mia", "Nathan", "Olivia"
]}.
\end{enumerate}
If the answer of the model belongs to the set of sampled names, then we check whether it is equal to the name associated to the smallest year.

We repeat the process, sampling \texttt{c} 40 times and averaging the results.
What we obtain is a function from the probe displacement w.r.t. \texttt{c},  $\delta \texttt{p}\in [-300,300]$ to the probability of the model decision function $\mathbb{E}_c[D_1(\texttt{x},\texttt{y})]\in [0,1]$.

\paragraph{Results.} Figure~\ref{fig:experiments}b in the main text displays the probability of correct answers for both models across date displacements. 
Several observations support our theory:

\begin{enumerate}
    \item Both models show high performance when probe dates are near reference dates (small displacements), but performance degrades as displacement increases.
    \item The pattern follows our theoretical assumptions: the performance approaches chance level (0.5) when the reference years are close and the probe falls between them and when the reference years are far and probes is far form both.
    
\end{enumerate}

\paragraph{Similarity and identification tasks}
\begin{figure}
    \centering
    \includegraphics[width=0.7\linewidth]{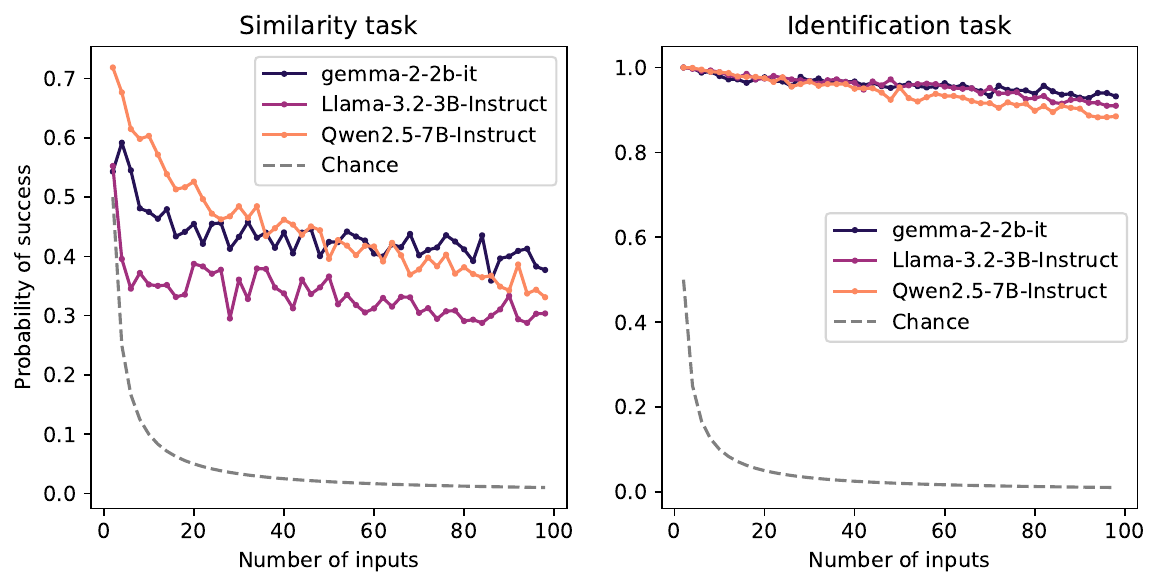}
    \caption{Similarity and identification performances of three LLMs on the interval of years $[800,1599]$}
    \label{fig:exp_years_performance}
\end{figure}
\paragraph{Experimental setup.} We then performed similarity and identification tasks to gauge the performance of these three models as the number of inputs provided increases.

For each number of inputs $n\in\sset{2,4,\dots,100}$, we sample 1,000 prompts built in the following way
\begin{itemize}
    \item \textbf{Similarity task:} \texttt{"A1 was born in x1. A2 was born in x2."}$+\dots+$\texttt{" An was born in xn." Who was born closest to p? Answer with a single name.}
    \item \textbf{Identification task:} \texttt{"A1 was born in x1. A2 was born in x2."}$+\dots+$\texttt{" An was born in xn." Who was born in p? Answer with a single name.}
\end{itemize}
Here \texttt{A1}$,\dots,$\texttt{An} are names sampled from a list of 200 names similar to the one described above, and $\texttt{x1},\dots,\texttt{xn}$ are random integers in the interval $[800,1599]$.
The probe \texttt{p} is a random integer in the same interval for the similarity task and, for the identification tasks, it is randomly chosen from the set $\sset{\texttt{x1},\dots,\texttt{xn}}$.

\paragraph{Results.}
In \Cref{fig:exp_years_performance}, we plot the performances obtained for the three models.
Overall, we see that all models perform well on the identification task, with its performance decreasing with a small rate.
Instead, for the similarity task we see how the performances are definitely worse, never being greater than 0.7 but decreasing in a much graceful way than the $1/n$ of random chance.

Interestingly, if we focus on Gemma and Qwen, we are able to qualitatively observe the same behavior of the theoretical model in \Cref{fig:n_items} of the main text.
In fact, it appears that Qwen is favouring generalization, resulting in a good similarity task performance for a low number of items but a steeper decrease for increasing $n$. Gemma, instead, achieves close to chance similarity task performances when $n$ is small but decreases less rapidly when $n$ increases.
If we map this to our theoretical investigation, it appears that Gemma is adopting a smaller $\varepsilon$ than Qwen, a conjecture which is corroborated by the identification test performance decreasing faster for the latter.

\subsubsection{VLM tasks}
\label{appendix:extra_numerics:vlm}

To assess the presence of finite semantic resolution in vision–language models (VLMs), we designed several spatial similarity/identification tasks using synthetic images. Two VLMs were evaluated: \texttt{gemma-3-12b-it} and \texttt{Qwen2.5-VL-7B-Instruct}. Besides collecting the models' textual responses, we also logged the scores (logits) of selected, task-depending tokens, to inspect how each model ranks different token choices before softmax.

\paragraph{Evidence of resolution}
\paragraph{Dataset.}
We generated 1,000 images, each featuring four black stencils positioned at fixed locations on a white background. The stencils, chosen randomly between square, triangle, heart, star, varied specific position across images. Additionally, each image included a randomly placed red X, designated as the "target" (see Figure~\ref{fig:wurstelony}). We logged the distance between the target and each stencil.

\begin{figure}
    \centering
    \begin{minipage}{0.2\textwidth}
        \centering
        \fbox{\includegraphics[width=\linewidth]{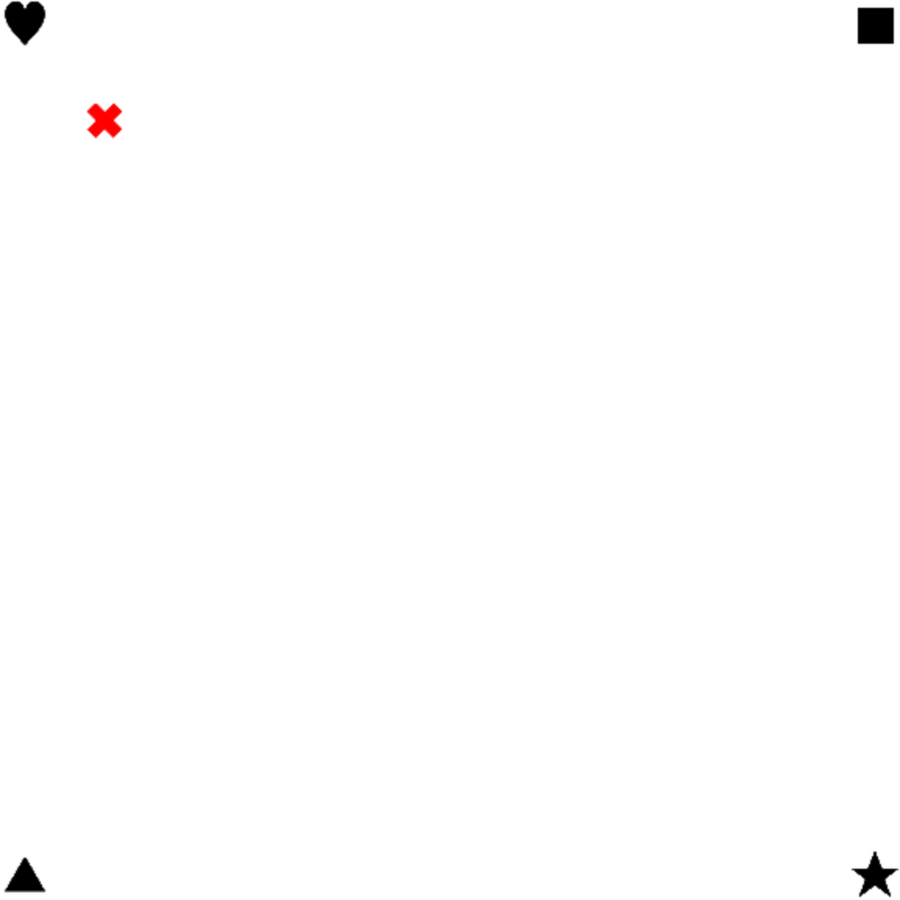}}
    \end{minipage}
    \hspace{5mm}
    \begin{minipage}{0.2\textwidth}
        \centering
        \fbox{\includegraphics[width=\linewidth]{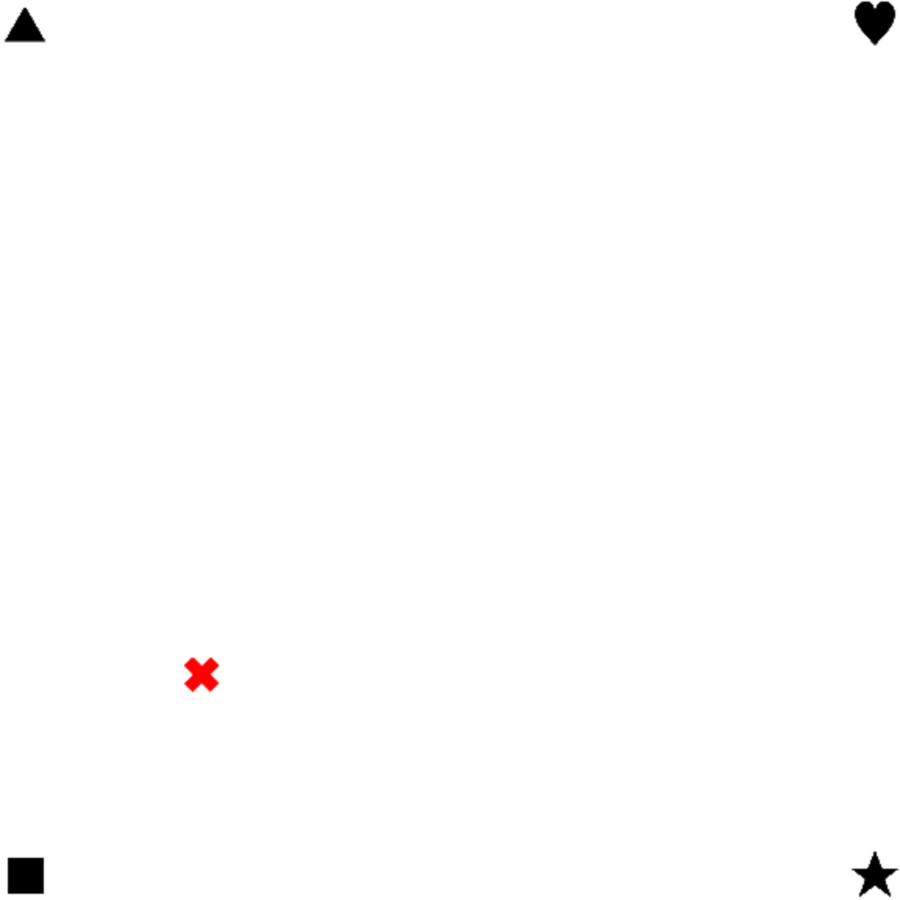}}
    \end{minipage}
    \hspace{5mm}
    \begin{minipage}{0.2\textwidth}
        \centering
        \fbox{\includegraphics[width=\linewidth]{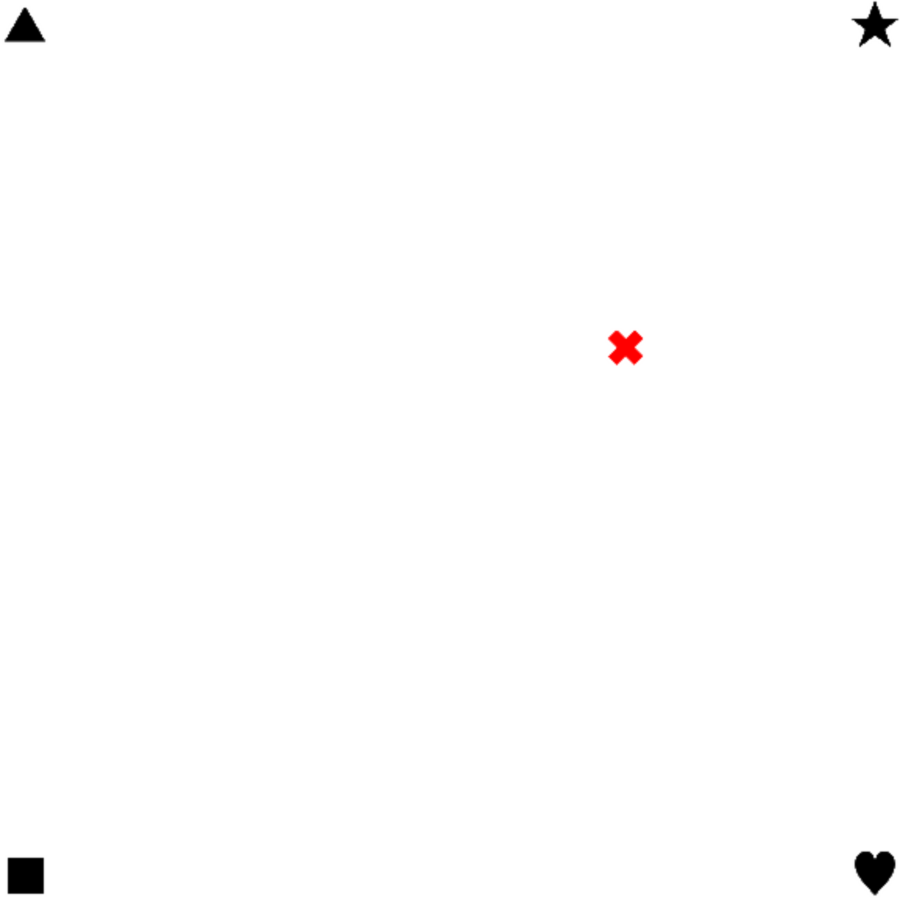}}
    \end{minipage}
    \caption{Examples of image inputs for the spatial resolution task.}
    \label{fig:wurstelony}
\end{figure}

\paragraph{Experimental setup.}
We showed each image to both Qwen and Gemma, together with a query prompt: \texttt{"The picture contains four black shapes: a square, triangle, heart and a star. There is also a red X. Which black shape is the closest to the X? Respond with only the shape's name."}. We logged the models' textual responses and the token scores for {\em square}, {\em triangle}, {\em heart} and a {\em star}. 
For each sampled location, we recorded the model's output and computed an accuracy map. A smoothed version of this map was obtained by averaging over local neighbourhoods to reveal confidence gradients.

\paragraph{Results.} 
Figure~\ref{fig:experiments}c (main text) shows the accuracy maps for both models. In both cases, we observe a central region around each shape where the model is consistently correct, surrounded by transition zones where accuracy rapidly deteriorates. This behavior mirrors the emergence of a finite resolution scale: when the red cross is placed sufficiently far from all reference shapes, the models are increasingly unable to resolve which object is closest.

Moreover, the spatial structure of the confusion regions reveals differences between models: \texttt{gemma-3-12b-it} exhibits a tighter high-confidence core, while \texttt{Qwen2.5-VL-7B-Instruct} shows broader transition bands, suggesting differences in their spatial encoding fidelity. The smoothed accuracy maps further support the hypothesis that VLMs implement a distance-dependent proximity function with finite support, analogous to the semantic similarity functions described in the theoretical model (Section~\ref{section:setup}).

\paragraph{Color similarity task}\label{sub:color} 

\paragraph{Dataset.}

We created 5,000 images, each containing between 4 and 12 colored squares. Each square was labeled with a unique letter, serving as an identifier, as shown in Figure~\ref{fig:pantony}. The colors of the squares were generated using the HSV color model, where the hue (H) was assigned randomly, while saturation (S) and value (V) were maximized to ensure vivid and bright colors.

\begin{figure}[h!]
    \centering
    \begin{minipage}{0.16\textwidth}
        \centering
        \fbox{\includegraphics[width=\linewidth]{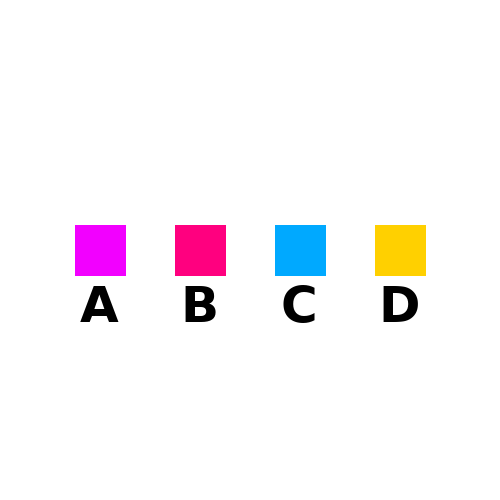}}
        \caption*{(a)}
    \end{minipage}
    \hspace{2mm}
    \begin{minipage}{0.16\textwidth}
        \centering
        \fbox{\includegraphics[width=\linewidth]{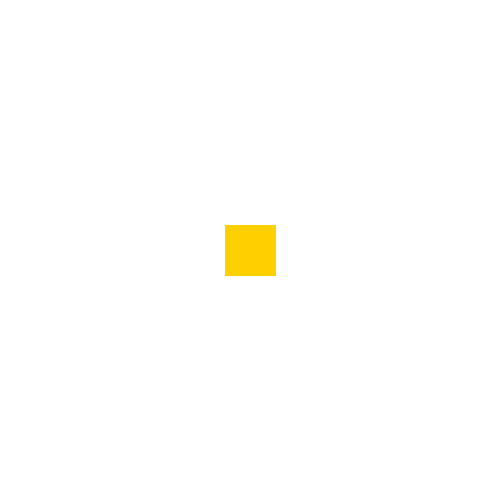}}
        \caption*{(b)}
    \end{minipage}
    \hspace{2mm}
    \begin{minipage}{0.16\textwidth}
        \centering
        \fbox{\includegraphics[width=\linewidth]{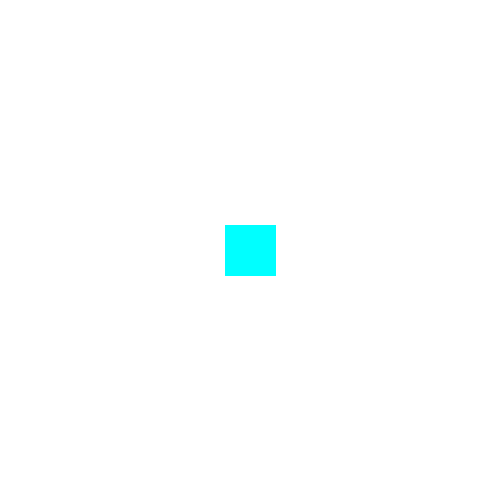}}
        \caption*{(c)}
    \end{minipage}
    \hspace{2mm}
    \begin{minipage}{0.16\textwidth}
        \centering
        \fbox{\includegraphics[width=\linewidth]{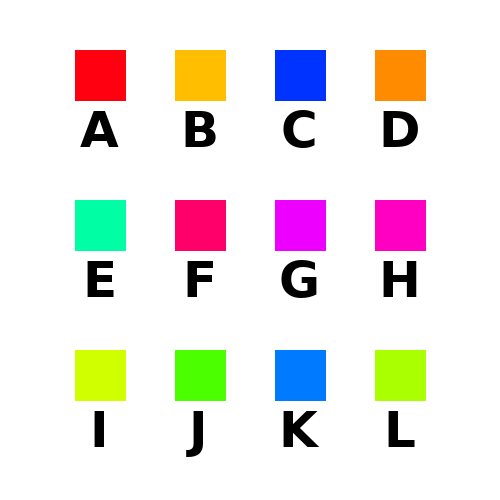}}
        \caption*{(d)}
    \end{minipage}
    \caption{Examples of image inputs for the color similarity task. Panels (a) and (d) represent two reference images, with four and twelve colors respectively.  
    Given the reference image (a), a query image for the identification task (color occurring in the reference image) is depicted in panel (b), and a query image for the similarity task (color not occurring in the reference image) is depicted in panel (c).}
    \label{fig:pantony}
\end{figure}

\paragraph{Experimental setup.}

In this task, we presented the models with a pair of images and a textual query. The first image, dubbed {\em reference} image, contained 4 to 12 labelled color squares, as described above. The second image ({\em query image}) displayed a single, centered square, whose color was either one of the colors occurring in the reference image (identification task) or a completely random one (similarity task). 
In both cases, the query was: \texttt{"In the first picture there are squares of different colors, labelled with uppercase letters. In the second picture there is only one target square. Identify which square in the first picture is most similar to the target square in the second picture. Reply with the corresponding letter and nothing else."}. 
We logged the color of the target square and its similarity with respect to all colors occurring in the paired reference image. We also logged the model's textual answers and the token score for each single letter (possible answers).

\paragraph{Results.} 
In \Cref{fig:pantony_resolution}a we show the similarity task and identification task performances as functions of the number of input colored squares.
In particular, we observe a decreasing identification performance which, in both models, can be fitted using the theoretical curve of \Cref{theo:n-item} (main text).
The fitted parameter $b(\varepsilon)$ suggests the presence of a larger effective resolution for Gemma and a lower one for Qwen.

To investigate this resolution, we gather, for each experiment, the scores each model assigns to the letters associated to the \emph{wrong} colors (thus excluding the most similar ones), together with their circular hue distance from the probe color, normalized to $[0,0.5]$.
We only take the scores associated to the wrong colors in order to avoid the bias of the correct answer having always low distance.

We plot the model score as a function of the distance in \Cref{fig:pantony_resolution}b.
Both model display an emergent resolution, with points with large hue distance being concentrated around a fixed \virg{noise} level.
Moreover, the scores for Gemma display a step-like shape suggesting that the learned similarity function may be similar to the constant similarity assumed in the theoretical analysis.
Qwen, instead, shows a more continuous decrease in score-similarity with distance, more in line with the results obtained in \Cref{sub:res_llm}, and associated to higher performances (\Cref{fig:pantony_resolution}a).

\begin{figure}
    \centering
    \includegraphics[width=\linewidth]{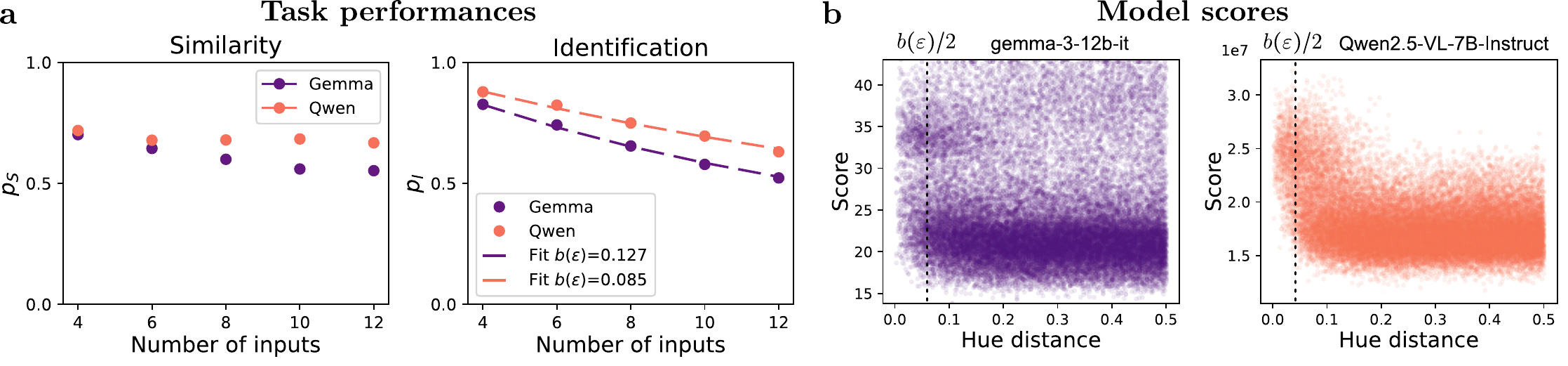}
    \caption{
    \textbf{a.} Similarity and identification probabilities for the color test explained in \Cref{sub:color}. In the identification plot, the dashed curves are the theoretical curves of \Cref{theo:n-item} (main text) fitted to the data. 
    \textbf{b.} Token score associated to the \emph{wrong} responses, as a function of the hue distance from the probe color. The dotted black lines represent the resolution values $b(\varepsilon)/2$ fitted using the theoretical resul of \Cref{theo:n-item} (main text) on the identification performances.}
    \label{fig:pantony_resolution}
\end{figure}

\putbib[biblio]
\end{bibunit}

\end{document}